\documentclass{article}

\usepackage[final]{neurips_2024}

\usepackage[utf8]{inputenc} %
\usepackage[T1]{fontenc}    %
\usepackage{hyperref}       %
\usepackage{url}            %
\usepackage{booktabs}       %
\usepackage{multirow}
\usepackage{amsfonts}       %
\usepackage{nicefrac}       %
\usepackage{microtype}      %
\usepackage{xcolor}         %
\usepackage{graphicx}
\usepackage{amsmath}
\usepackage[capitalise]{cleveref}
\usepackage{wrapfig}

\let\cite\citep

\usepackage{amsthm}
\newtheorem{theorem}{Theorem}
\newtheorem{lemma}[theorem]{Lemma}
\newtheorem{corollary}[theorem]{Corollary}

\usepackage{lipsum}

\usepackage{pifont}
\definecolor{myred}{RGB}{215,48,39}
\definecolor{mygreen}{RGB}{26,152,80}
\newcommand{\cmark}{\textcolor{mygreen}{\ding{51}}}
\newcommand{\xmark}{\textcolor{myred}{\ding{55}}}

\newcommand{\tr}{\operatorname{tr}}

\newcommand{\KL}{\operatorname{KL}}
\newcommand{\Diag}{\operatorname{Diag}}

\newcommand{\loss}{\mathcal{L}}
\newcommand{\proj}{\pi}

\DeclareMathOperator*{\argmin}{arg\,min}

\newcommand{\M}{\Mm}
\newcommand{\T}{\Tt}
\renewcommand{\S}{\mathbb{S}}
\newcommand{\N}{\Nn}
\newcommand{\R}{\RR}
\newcommand{\E}{\EE}

\newcommand{\HH}{\mathbb{H}}

\newcommand{\PP}{\mathbb{P}}

\newcommand{\RR}{\mathbb{R}}

\newcommand{\TT}{\mathbb{T}}

\newcommand{\EE}{\mathbb{E}}

\newcommand{\Mm}{\mathcal{M}}
\newcommand{\Nn}{\mathcal{N}}
\newcommand{\Oo}{\mathcal{O}}

\newcommand{\Tt}{\mathcal{T}}
\newcommand{\Uu}{\mathcal{U}}

\title{Learning Distributions on Manifolds\\with Free-Form Flows}

\author{
Peter Sorrenson*, Felix Draxler*, Armand Rousselot*, Sander Hummerich, Ullrich Köthe \\
Computer Vision and Learning Lab, Heidelberg University \\
{\small *equal contribution, {\tt firstname.lastname@iwr.uni-heidelberg.de }}
}

\begin{document}

\maketitle

\begin{abstract}
    We propose Manifold Free-Form Flows (M-FFF), a simple new generative model for data on manifolds. The existing approaches to learning a distribution on arbitrary manifolds are expensive at inference time, since sampling requires solving a differential equation. Our method overcomes this limitation by sampling in a single function evaluation. The key innovation is to optimize a neural network via maximum likelihood on the manifold, possible by adapting the free-form flow framework to Riemannian manifolds. M-FFF is straightforwardly adapted to any manifold with a known projection. It consistently matches or outperforms previous single-step methods specialized to specific manifolds. It is typically two orders of magnitude faster than multi-step methods based on diffusion or flow matching, achieving better likelihoods in several experiments. We provide our code at \url{https://github.com/vislearn/FFF}.
\end{abstract}

\section{Introduction}

Generative models have achieved remarkable success in various domains such as image synthesis \citep{rombach2022highresolution}, natural language processing \citep{brown2020language}, scientific applications \citep{noe2019boltzmann} and more. However, the approaches are not directly applicable when dealing with data inherently structured in non-Euclidean spaces, which is common in fields such as the natural sciences, computer vision, and robotics. Examples include earth science data on a sphere, the orientation of real-world objects given as a rotation matrix in $SO(3)$, or data on special geometries modeled by meshes or signed distance functions. Representing such data naively using internal coordinates, such as angles, can lead to topological issues, causing discontinuities or artifacts.

Luckily, many generative models can be adapted to handle data on arbitrary manifolds. However, the predominant methods compatible with arbitrary Riemannian manifolds involve solving differential equations---stochastic (SDEs) or ordinary (ODEs)---for sampling and density estimation \citep{rozen2021moser, mathieu2020riemannian, huang2022riemannian, de2022riemannian, chen2024flow}. These methods are computationally intensive due to the need for numerous function evaluations during integration, slowing down inference.

\begin{figure}
    \centering
    \includegraphics[width=\linewidth]{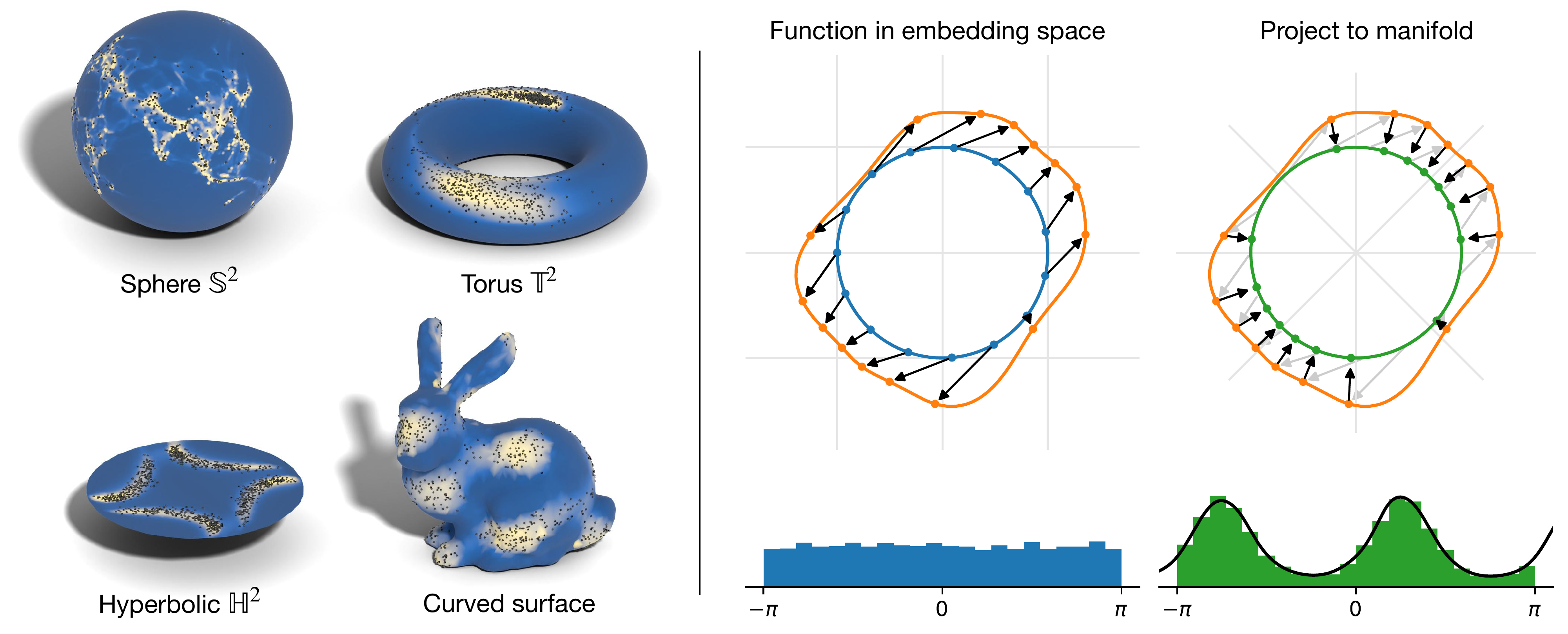}
    \caption{Manifold Free-Form Flows (M-FFF) learn generative models on a variety of manifolds. \textit{(Left)}~The learned distributions \textit{(colored surface)} accurately match the test points \textit{(black dots)}. \textit{(Right)}~We parameterize M-FFF using a neural network in an embedding space, whose outputs are projected to the manifold. This enables simulation-free training and inference, and naturally respects the corresponding geometry, yielding fast sampling and continuous distributions regardless of the manifold.}
    \label{fig:m-fff-overview}
\end{figure}

To address these challenges, we introduce a novel approach for modeling distributions on arbitrary Riemannian manifolds that circumvents the computational burden of previous methods. This is achieved by using a single feed-forward neural network on an embedding space as a generator, with outputs projected to the manifold (\cref{fig:m-fff-overview}). We learn this network as a normalizing flow, facilitated by generalizing the free-form flow framework \citep{draxler2024freeform, sorrenson2024lifting} to Riemannian manifolds. The core innovation is estimating the gradient of the negative log-likelihood within the tangent space of the manifold.

In particular, we make the following contributions:
\begin{itemize}
    \item We extend free-form flows to Riemannian manifolds, yielding manifold free-form flows (M-FFF) in \cref{sec:m-fff}.
    \item M-FFF can easily be adapted to arbitrary Riemannian manifolds, requiring only a projection function from an embedding space.
    \item It only relies on a single function evaluation during training and sampling, speeding up inference over multi-step methods typically by two orders of magnitude.
    \item M-FFF consistently matches or outperforms previous single-step methods on several benchmarks on spheres, tori, rotation matrices, hyperbolic space and curved surfaces (see \cref{fig:m-fff-overview} and \cref{sec:experiments}). 
    In addition, it is consistently faster than multi-step methods by two orders of magnitude, while also outperforming them in terms of likelihood in several cases.
\end{itemize}
Together, manifold free-form flows offer a novel and efficient approach for learning distributions on manifolds, applicable to any Riemannian manifold with a known embedding and projection.

\section{Related work}
\label{sec:related-work}

\begin{table}[htb]
    \centering
    \caption{Feature comparison of generative models on manifolds. We give a ``\cmark'' if any method in a category meets this requirement.}
    \begin{tabular}{lccc}
        \toprule
        &   \shortstack{Respects topology} & \shortstack{Single step  sampling} & \shortstack{Arbitrary  manifolds} \\
        \midrule
        Euclidean & \xmark& \cmark&\cmark\\
        Specialized &  \cmark& \cmark&\xmark\\
        Continuous time &  \cmark& \xmark&\cmark\\
        M-FFF (ours)&  \cmark& \cmark&\cmark\\
        \bottomrule
    \end{tabular}
    \label{tab:method-comparison}
\end{table}

Existing work on learning distributions on manifolds can be broadly categorized as follows: (i) leveraging Euclidean generative models; (ii) building specialized architectures that respect one particular kind of geometry; and (iii) learning a continuous time process on the manifold.
We compare our method to these approaches in \cref{tab:method-comparison} and give additional detail below.

\textbf{Euclidean generative models.} One approach maps the $n$-dimensional manifold to $\R^n$ and learns the resulting distribution \cite{gemici2016normalizing}. Another approach generalizes the reparameterization trick to Lie groups by sampling on the Lie algebra which can be parameterized in Euclidean space  \cite{falorsi2019reparameterizing}. These approaches come with the downside that a Euclidean representation may not respect the geometry of the manifold sufficiently, e.g. mapping the earth to a plane causes discontinuities at the boundaries. This can be overcome by learning distributions on overlapping charts that together span the full manifold \cite{kalatzis2021density}. An orthogonal solution is to embed the data and add noise to it in the off-manifold directions, so that the distribution can be learnt directly in an embedding space $\R^m$ \cite{brofos2021manifold}; this only gives access to an ELBO instead of the exact density.
Our method also works in the embedding space so that it respects the geometry of the manifold, but directly optimizes the likelihood on the manifold.

\textbf{Specialized architectures} take advantage of the specific geometry of a certain kind of manifold to come up with special coupling blocks for building normalizing flows such as $SO(3)$ \cite{liu2023delving}, $SU(d), U(d)$ \cite{boyda2021sun,kanwar2020equivariant}; hyperbolic space \cite{bose2020latent}; tori and spheres \cite{rezende2020normalizing}.
Manifold free-form flows are not restricted to one particular manifold, but can be easily applied to any manifold for which an embedding and a projection to the manifold is known. As such, our model is an alternative to all of the above specialized architectures.

\textbf{Continuous time models} build a generative model based on parameterizing an ODE or SDE on any Riemannian manifold, meaning that they specify the (stochastic) differential equation in the tangent space \cite{rozen2021moser, falorsi2021continuous, falorsi2020neural, huang2022riemannian, mathieu2020riemannian, de2022riemannian, chen2024flow, lou2020neural, ben2022matching}. These methods come with the disadvantage that sampling and density evaluation integrates the ODE or SDE, requiring many function evaluations.
Our manifold free-form flows do not require repeatedly evaluating the model, a single function call followed by a projection is sufficient.

At its core, our method generalizes the recently introduced free-form flow (FFF) framework \cite{draxler2024freeform} based on an estimator for the gradient of the change of variables formula \cite{sorrenson2024lifting}. We give more details in \cref{sec:fff}.

\section{Background}

In this section, we provide the background for our method: We present an introduction to free-form flows and Riemannian manifolds. %

\subsection{Free-form flows}
\label{sec:fff}

Free-form flows are a class of generative models that generalize normalizing flows to work with arbitrary feed-forward neural network architectures \cite{draxler2024freeform}.

\paragraph{Euclidean change-of-variables} Traditionally, normalizing flows are based on invertible neural networks (INNs, see \citet{kobyzev2021normalizing} for an overview) that learn an invertible transformation $z = f_\theta(x)$ mapping from data $x \in \R^n$ to latent codes $z \in \R^n$. This gives an explicitly parameterized probability density $p_\theta(x)$ via the change-of-variables:
\begin{equation}
    \label{eq:normalizing-flow-change-of-variables}
    \log p_\theta(x) = \log p_Z(f_\theta(x)) + \log|f_\theta'(x)|,
\end{equation}
where $f_\theta'(x) \in \R^{n \times n}$ is the Jacobian matrix of $f_\theta(x)$ with respect to $x$, evaluated at $x$; $|f_\theta'(x)|$ is its absolute determinant. The distribution of latent codes $p_Z(z)$ is chosen such that the log-density is easy to evaluate and it is easy to sample from, such as a standard normal. Normalizing flows can be trained by minimizing the negative log-likelihood over the training data distribution:
\begin{equation}
    \label{eq:negative-log-likelihood}
    \min_\theta \loss_\text{NLL} = \min_\theta \E_{p_\text{data}(x)} [- \log p_\theta(x)].
\end{equation}
This is equivalent to minimizing the Kullback-Leibler-divergence between the true data distribution and the parameterized distribution $\KL(p_\text{data}\|p_\theta)$. 
Sampling from the model is achieved by pushing samples from the latent distribution $z \sim p_Z$ through the inverse of the learned function: $x = f_\theta^{-1}(z) \sim p_\theta$.

\paragraph{Euclidean gradient estimator} Naively computing the volume change $\log |f_\theta'(x)|$ in \cref{eq:normalizing-flow-change-of-variables} is expensive since it contains the full Jacobian matrix $f'_\theta(x) \in \RR^{n \times n}$, requiring $\Oo(n)$ automatic differentiation steps to compute. Normalizing flow architectures usually avoid this expensive computation by further restricting the architecture to allow fast computation. Luckily, even if such a shortcut is not available, its \textit{gradient}, which is all we need for training, can still be efficiently estimated as follows:
\begin{theorem}[Volume change gradient estimator, \citet{draxler2024freeform}]
    \label{thm:euclidean-volume-change-estimator}
    Let $f_\theta: \R^n \to \R^n$ be a diffeomorphism. Let $v \in \R^n$ be a random variable with zero mean and unit covariance. Then, the derivative of the volume change has the following trace expression, where $z = f_\theta(x)$:
    \begin{align}
        \nabla_\theta \log |f_\theta'(x)| 
            &= \tr((\nabla_\theta f_\theta'(x)) f_\theta'^{-1}(z)) \\
            &= \E_v \left[ v^T (\nabla_\theta f_\theta'(x)) {f_\theta^{-1}}'(z) v \right].
    \end{align}
\end{theorem}
Replacing the expected value over $v$ by a single sample, and using a stop-grad ($\texttt{SG}$) operation, \cref{thm:euclidean-volume-change-estimator} allows us to compute a term whose gradient is an unbiased estimator for the gradient of \cref{eq:negative-log-likelihood}:
\begin{equation}
    \label{eq:invertible-gradient-estimator}
    \nabla_\theta \loss_\text{NLL}(x) \approx \nabla_\theta (-\log p_Z(z) - v^T f_\theta'(x) \texttt{SG}({f_\theta^{-1}}'(z) v)).
\end{equation}
Comparing \cref{eq:normalizing-flow-change-of-variables,eq:invertible-gradient-estimator} reveals that $\log|f_\theta'(x)|$ is replaced by a single vector-Jacobian $v^T f_\theta'(x)$ and a Jacobian-vector product ${f_\theta^{-1}}'(z) v$, each of which require only one automatic differentiation operation. Note that while the gradient estimate is unbiased, computing the term in the brackets is not informative about $\loss_\text{NLL}$. Thus, for density estimation at inference, \cref{eq:normalizing-flow-change-of-variables} is explicitly evaluated using the full Jacobian.

\paragraph{Free-form architectures} The central idea of free-form flows is to soften the restriction that the learned model be invertible. Instead, they learn two separate networks, an encoder $f_\theta$ and a decoder $g_\phi$ coupled by a reconstruction loss, circumventing the need for an invertible neural network $f_\theta$:
\begin{equation}
    \label{eq:fff-recon-loss}
    \loss_\text{R} = \E_{p_\text{data}(x)} \left[ \|g_\phi(f_\theta(x)) - x\|^2 \right].
\end{equation}
Together, this gives the loss of free-form flows with $\beta$, the reconstruction weight as a hyperparameter:
\begin{equation}
    \loss_\text{FFF} = \loss_\text{NLL} + \beta \loss_\text{R}.
\end{equation}
This allows replacing constrained invertible architectures with free-form neural networks.
Since $f_\theta$ is not restricted to efficiently compute the volume change, free-form flows use \cref{eq:invertible-gradient-estimator} to compute the gradient of $\loss_\text{NLL}$. To compute \cref{eq:invertible-gradient-estimator}, free-form flows approximate ${f_\theta^{-1}}'(z)$ (which is not tractable) by $g_\phi'(z)$ during training:
\begin{equation}
    \label{eq:fff-gradient-estimator}
    \nabla_\theta \loss_\text{NLL}(x) \approx \nabla_\theta (-\log p_Z(z) - v^T f_\theta'(x) \texttt{SG}(g_\phi'(z) v)).
\end{equation}
For density estimation at inference, \citet{draxler2024freeform} recommend using the explicit decoder Jacobian for the volume change.

\subsection{Riemannian manifolds}

A manifold is a fundamental concept in mathematics, providing a framework for describing and analyzing spaces that locally resemble Euclidean space, but may have different global structure. For example, a small region on a sphere is similar to the Euclidean plane, but walking in a straight line on the sphere in any direction will return back to the starting point, unlike on a plane.

Mathematically, an $n$-dimensional manifold, denoted as $\M$, is a space where every point has a neighborhood that is topologically equivalent to $\RR^n$. A Riemannian manifold $(\M, G)$ extends the concept of a manifold by adding a Riemannian metric $G$ which introduces a notion of distances and angles. At each point $x$ on the manifold, there is an associated tangent space $\T_x \M$ which is an $n$-dimensional Euclidean space, characterizing the directions in which you can travel and still stay on the manifold. The metric $G$ acts in this space, defining an inner product between vectors. From this inner product, we can compute the length of paths along the manifold, distances between points as well as volumes (see next section).

In this paper, we consider Riemannian manifolds globally embedded into an $m$-dimensional Euclidean space $\RR^m$, with $n \leq m$. Embedding means that we represent a point on the manifold $x \in \M$ as a vector in $\R^m$ confined to an $n$-dimensional subspace; we write $x \in \M \subseteq \R^m$ and denote by $\pi : \RR^m \to \M$  a projection from the embedding space to the manifold. A global embedding is a smooth, injective mapping of the entire manifold into $\RR^m$, its smoothness preserving the topology.

In most cases, we work with, but are not limited to, isometrically embedded manifolds, meaning that the metric is inherited from the ambient Euclidean space. Intuitively, this means that the length of a path on the manifold is just the length of the path in the embedding space. We note that due to the Nash embedding theorem \cite{nash1956imbedding}, every Riemannian manifold has a smooth isometric embedding into Euclidean space of some finite dimension, so in this sense using isometric embeddings is not a limitation. Nevertheless, for some manifolds (especially with negative curvature, e.g.~hyperbolic space) there may not be a sensible isometric embedding. 

\begin{table}
    \centering
    \caption{Manifolds, a global embedding and corresponding projections considered in this paper.}
    \resizebox{\linewidth}{!}{
    \begin{tabular}{lccc}
        \toprule
        Manifold& Dimension $n$ &Embedding &Projection\\
        \midrule
        Generic & $\operatorname{rank}(\proj'(\proj(x)))$ & $ \{ x \in \R^m : \proj(x) = x \}$ & $x \mapsto \proj(x)$ \\
        \midrule
        Rotations $SO(d)$& $(d-1)d/2$ &$\{ Q \in \R^{d \times d}: QQ^T = I, \det Q = 1 \}$ &$R \mapsto \argmin_{Q \in SO(d)} \|Q - R \|_F$; see \cref{eq:procrustes-solution}\\
        Sphere $\S^n$& $n$ &$\{x \in \R^{n+1}: \|x\|=1\}$ &$x \mapsto x / \|x\|$\\
        Torus $\TT^n = (\S^1)^n$& $n$ &$\{ X \in \R^{n \times 2}: \|X_i\|=1 \text{ for } i=1...n \}$ & $X_i \mapsto X_i / \|X_i\|\text{ for } i=1...n$\\
        Hyperbolic $\HH^n$ & $n$ & $\{x \in \R^n : \|x\| < 1\}$ & $x \mapsto x \min \{1, (1 - \epsilon) / \|x\| \}$ \\
        \bottomrule
 \end{tabular}
 }
    \label{tab:manifolds}
\end{table}

\section{Manifold free-form flows}
\label{sec:m-fff}

The free-form flow (FFF) framework allows training any pair of parameterized encoder $f_\theta(x)$ and decoder $g_\phi(z)$ as a generative model, see \cref{sec:fff}. In this section, we demonstrate how to generalize the steps in \cref{sec:fff} to arbitrary Riemannian manifolds.
Note that for simplicity, we choose the same manifold in data and latent spaces, i.e.~$\M_X = \M_Z = \M$, but the method readily applies to $\M_X \neq \M_Z$ or $G_X \neq G_Z$ as long as they are topologically compatible, like a sphere and a closed 3D surface without holes. The detailed derivations in the appendix consider this generalization.

\paragraph{Manifold change of variables}

The volume change on manifolds generalizes the Euclidean variant in \cref{eq:normalizing-flow-change-of-variables} by (a) considering the change of volume in the tangent space and (b) accounting for volume change due to changes in the metric:

\begin{theorem}[Manifold change of variables] \label{thm:embedded-manifold-cov}
    Let $(\M, G)$ be a $n$-dimensional Riemannian manifold embedded in $\R^m$, i.e., $\M \subseteq \R^m$. Let $p_X$ be a probability distribution on $\M$ and let $f: \M \rightarrow \M$ be a diffeomorphism. Let $p_Z$ be the pushforward of $p_X$ under $f$ (i.e., if $p_X$ is the probability density of $X$, then $p_Z$ is the probability density of $f(X)$). 

    Let $x \in \M$. Define $Q \in \R^{m \times n}$ as an orthonormal basis for $\T_x \M$ and $R \in \R^{m \times n}$ as an orthonormal basis for $\T_{f(x)} \M$.

    Then, the probability densities $p_X$ and $p_Z$ are related under the change of variables $x \mapsto f(x)$ by the following equation:
    \begin{equation}
        \label{eq:manifold-cov}
        \log p_X(x) = \log p_Z(f(x)) + \log |R^T f'(x) Q| + \tfrac{1}{2} \log \frac{|R^T G(f(x)) R|}{|Q^T G(x) Q|}.
    \end{equation}
    where $Q$ and $R$ depend on $x$ and $f(x)$, respectively, although this dependency is omitted for brevity.
\end{theorem}
\begin{figure}
    \centering
    \includegraphics[width=1.\linewidth]{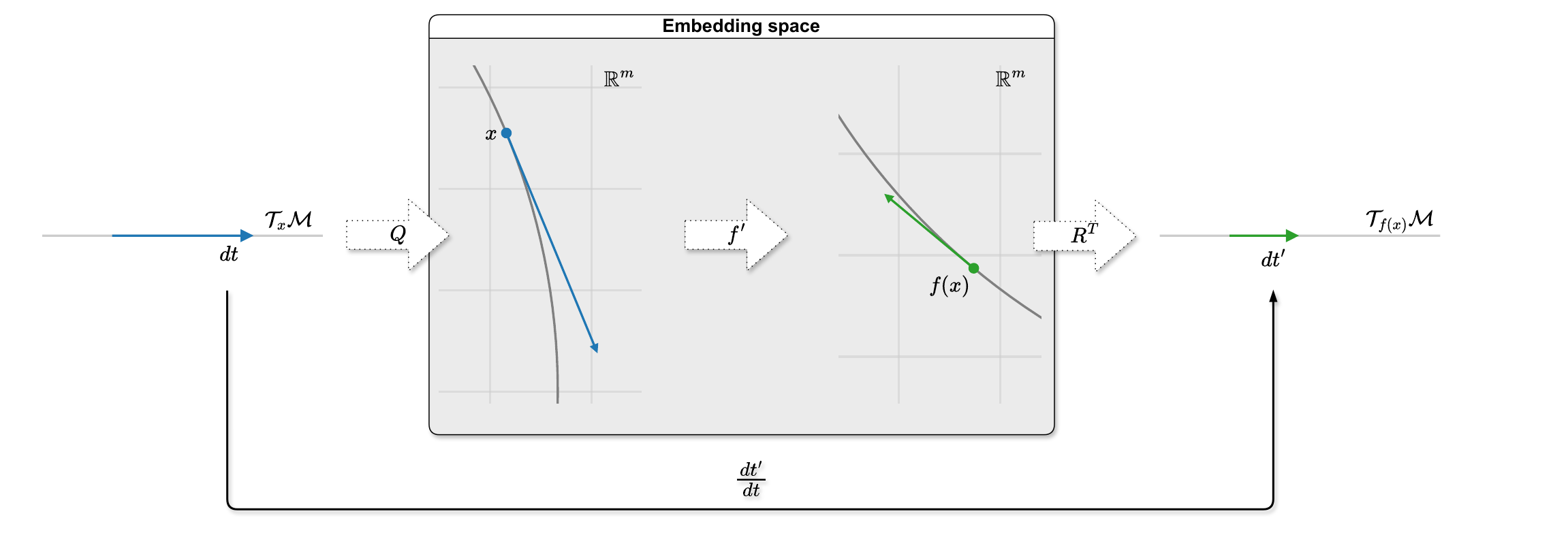}
    \caption{Computation of the volume change in the tangent space of the manifold:   The manifold change of variables formula in \cref{eq:manifold-cov-isometric} requires to compute the change of a volume element in the tangent spaces under f, which in this example is given by the ratio of lengths of $dt$ and $dt'$. Since $f$ is a map in the embedding space, $f'(x)$ defines a linear map between vectors from the embedding space. To correctly compute the change in volume, we use $Q$ and $R$ to change coordinates to the intrinsic tangent spaces, resulting in the linear map $R^T f'(x) Q: \T_x\M \rightarrow \T_{f(x)}\M$, which maps $dt$ to $dt'$. 
    }
    \label{fig:change-of-var}
\end{figure}
To give an intuition for this result, \cref{fig:change-of-var} shows how the volume change is computed for an isometrically embedded manifold, that is $G = I$ so that $|R^T G R| = |Q^T G Q| = 1$. This simplifies \cref{eq:manifold-cov} to:
\begin{equation} \label{eq:manifold-cov-isometric}
    \log p_X(x) = \log p_Z(f(x)) + \log |R^T f'(x) Q|.
\end{equation}
This is very similar to the familiar change of variables formula in the Euclidean case in \cref{eq:normalizing-flow-change-of-variables}, the only difference being that the determinant is evaluated on the $n \times n$ projection of $f'(x)$ into the tangent spaces. These projections are necessary as the Jacobian of $f$ is singular in the embedding space, since its action is restricted to the local tangent spaces. See the full proof in \cref{app:manifold-cov}.

\paragraph{Manifold gradient estimator}

We now generalize the volume change gradient estimator in \cref{thm:euclidean-volume-change-estimator} to an invertible function on the manifold $f_\theta : \M \to \M$. We find that taking the gradient of the manifold change of variables in \cref{eq:manifold-cov} results in essentially the same computation as in the Euclidean case, but the trace in is now evaluated in the local tangent space:
\begin{theorem} \label{thm:loss-function-trace}
    Under the assumptions of \cref{thm:embedded-manifold-cov} with $f = f_\theta$. Let $v \in \R^m$ be a random variable with zero mean and covariance $R R^T$. Then, the derivative of the change of variables term has the following trace expression, where $z = f_\theta(x)$:
    \begin{align}
        \nabla_\theta \log |R^T f_\theta'(x) Q| 
            &= \tr(R^T (\nabla_\theta f_\theta'(x)) {f_\theta^{-1}}'(z) R) \\
            &= \E_v \left[ v^T (\nabla_\theta f_\theta'(x)) {f_\theta^{-1}}'(z) v \right].
    \end{align}
\end{theorem}
This shows that the adaptation of free-form flows for an invertible function $f$ to isometrically embedded manifolds is remarkably simple (see full proof in \cref{app:loss}; if the manifold is not isometrically embedded, add the corresponding term in \cref{eq:manifold-cov}):
\begin{equation}
    \label{eq:manifold-nll-estimator}
    \nabla_\theta \loss_{\M\text{-NLL}} = \nabla_\theta \E_{x,v} \left[ -\log p_Z(z) - v^T f_\theta'(x) \texttt{SG}[{f_\theta^{-1}}'(z)v] \right].
\end{equation}
The only change is that $v$ must have covariance $R R^T$ rather than the identity. We achieve this by sampling standard normal vectors $\tilde v \in \R^m$ and then projecting them into the tangent space using the Jacobian of the projection function:
\begin{equation}
    v = \proj'(f_\theta(x)) \tilde v.
\end{equation}
Constructing $v$ like this fulfills the conditions of \cref{thm:loss-function-trace} because $\E_v[v] = 0$, and:
\begin{equation}
    \operatorname{Cov} \left[ v \right] = \E_{\tilde{v}} \left[ \proj'(f_\theta(x)) \tilde v \tilde{v}^T \proj'(f_\theta(x))^T \right] = \proj'(f_\theta(x)) \proj'(f_\theta(x))^T = R R^T.
\end{equation}
Just like \cite{sorrenson2024lifting,draxler2024freeform}, we further normalize $v$ to reduce the variance of the trace estimator. \Cref{eq:manifold-nll-estimator} now allows training invertible architectures on manifolds even if the volume change $\log |R^T f_\theta'(x) Q|$ is not tractable. 

Despite using a stochastic estimator for the gradient, we argue in \cref{app:scaling} that the scaling of the estimator variance with dimension is comparable to the variance due to stochasticity in flow matching and similar methods.

\paragraph{Free-form manifold-to-manifold neural networks}  As discussed in \cref{sec:related-work}, invertible architectures have to be specially constructed for each manifold. To overcome this limitation, we now soften the constraint that the learned model be analytically invertible. Instead, we learn a pair of free-form manifold-to-manifold neural networks, an encoder $f_\theta(x)$ and a decoder $g_\phi(z)$ as arbitrary functions on the manifold:
\begin{align}
    \label{eq:manifold-to-manifold}
    f_\theta(x) : \M \to \M, \quad
    g_\phi(z) : \M \to \M.
\end{align}
We choose to fulfill \cref{eq:manifold-to-manifold} using feed-forward neural networks $\tilde f_\theta, \tilde g_\phi: \R^m \to \R^m$ working in an embedding space $\R^m$ of $\M$, but ensure that their outputs lie on the manifold by appending a projection $\proj : \RR^m \to \M$, mapping points from the embedding space $\RR^m$ to the manifold $\M$:
\begin{align}
    \label{eq:manifold-neural-network}
    f_\theta(x) = \proj(\tilde f_\theta(x)), \quad
    g_\phi(z) = \proj(\tilde g_\phi(z)).
\end{align}
\Cref{fig:m-fff-overview} illustrates this for an example on a circle $\M = \S^1$.

Just like in the Euclidean case, we employ a reconstruction loss to make $f_\theta$ and $g_\phi$ approximately inverse to one another:
\begin{equation}
    \loss_\text{R} = \EE_{p_\text{data}}[\| g_\phi(f_\theta(x)) - x \|^2].
\end{equation}
We measure the distance in the embedding space; one can modify this to use an on-manifold distance (e.g.~great circle distance for the sphere) but we find that ambient Euclidean distance works well in practice, since it is almost identical for small distances and this is the regime we work in.

This allows us to substitute ${f_\theta^{-1}}'(z) \approx g_\phi'(z)$ in \cref{eq:manifold-nll-estimator}:
\begin{equation}
    \label{eq:m-fff-nll-estimator}
    \nabla_\theta \loss_{\M\text{-NLL}} \approx \nabla_\theta \E_{x,v} \left[ -\log p_Z(z) - v^T f_\theta'(x) \texttt{SG}[{g_\phi}'(z)v] \right].
\end{equation}

In \cref{thm:error-bound} we show that the error of the gradient estimator is bounded by a measure of the mismatch between the encoder and decoder Jacobian matrices. When the encoder and decoder are true inverses, the error reaches zero.

\paragraph{Regularization and final loss}
\label{sec:loss-and-regularization}

We find that adding the following two regularizations to the loss improve the stability and performance of our models. Firstly, the reconstruction loss on points sampled uniformly from the data manifold:
\begin{equation}
    \loss_\text{U} = \E_{x \sim \mathcal{U}(\M)} [ \| g_\phi(f_\theta(x)) - x \|^2 ],
\end{equation}
helps ensure that we have a globally consistent mapping between the data and latent manifolds in low data regions. Secondly, the squared distance between the output of $\tilde f_\theta$ and its projection to the manifold:
\begin{equation}
    \loss_\text{P} = \E_{p_\text{data}(x)} [ \| \tilde f_\theta(x) - f_\theta(x) \|^2 ]
\end{equation}
discourages the output of $\tilde f_\theta$ from entering unprojectable regions, for example the origin when the manifold is $\S^n$. The same regularizations can be applied starting from the latent space.

The full loss is:
\begin{equation}
    \loss = \loss_{\M-\text{NLL}} + \beta_\text{R} \loss_\text{R} + \beta_\text{U} \loss_\text{U} + \beta_\text{P} \loss_\text{P}
    \label{eq:loss-with-regularization}
\end{equation}
where the gradient of $\loss_{\M-\text{NLL}}$ is computed using \cref{eq:m-fff-nll-estimator}, and $\beta_\text{R}$, $\beta_\text{U}$ and $\beta_\text{P}$ are hyperparameters. We give our choices in \cref{app:experiments}.

\section{Experiments}
\label{sec:experiments}

We now demonstrate the practical performance of manifold free-form flows on various manifolds. We choose established experiments to ensure comparability with previous methods, and find:
\begin{itemize}
    \item M-FFF matches or outperforms previous single-step methods. M-FFF uses a simple ResNet architecture, whereas previous methods were specialized to the given manifolds, hindering adoption to novel manifolds.
    \item M-FFF generates samples faster by typically two orders of magnitude than methods sampling in several steps. Despite this great reduction in compute, it achieves a higher generative quality in several cases.
\end{itemize}
In our result tables,  we mark as bold (a) the best method overall (both single- and multi-step), and (b) the best single-step method. We provide reconstruction losses of our method and all details necessary to reproduce the experiments in \cref{app:experiments}. Furthermore, our code is available at \url{https://github.com/vislearn/FFF}. We run each experiment multiple times with different data splits and report the mean and standard deviation of those runs.

\paragraph{Synthetic distribution over rotation matrices}

The group of 3D rotations $SO(3)$ can be represented by rotations matrices with positive determinant, i.e., all $Q \in \RR^{3 \times 3}$ with $Q^TQ=I$ and $\det Q = 1$. We choose $\R^{3 \times 3}$ as our embedding space and project to the manifold by solving the constrained Procrustes problem via SVD \cite{lawrence2019purely} (see \cref{app:rotations}).

We evaluate M-FFF on synthetic mixture distributions proposed by \citet{de2022riemannian} with $M$ mixture components for $M = 16, 32$ and $64$. Samples from one of the distributions and samples from our model are depicted in \cref{fig:so3-samples}.

\Cref{tab:so3-results} shows that M-FFF outperforms the normalizing flow developed for $SO(3)$ by \citet{liu2023delving},  as well as the diffusion-based approaches for the mixtures $M=32$ and $64$.

\begin{figure}
    \centering
    \includegraphics[width=0.4\linewidth]{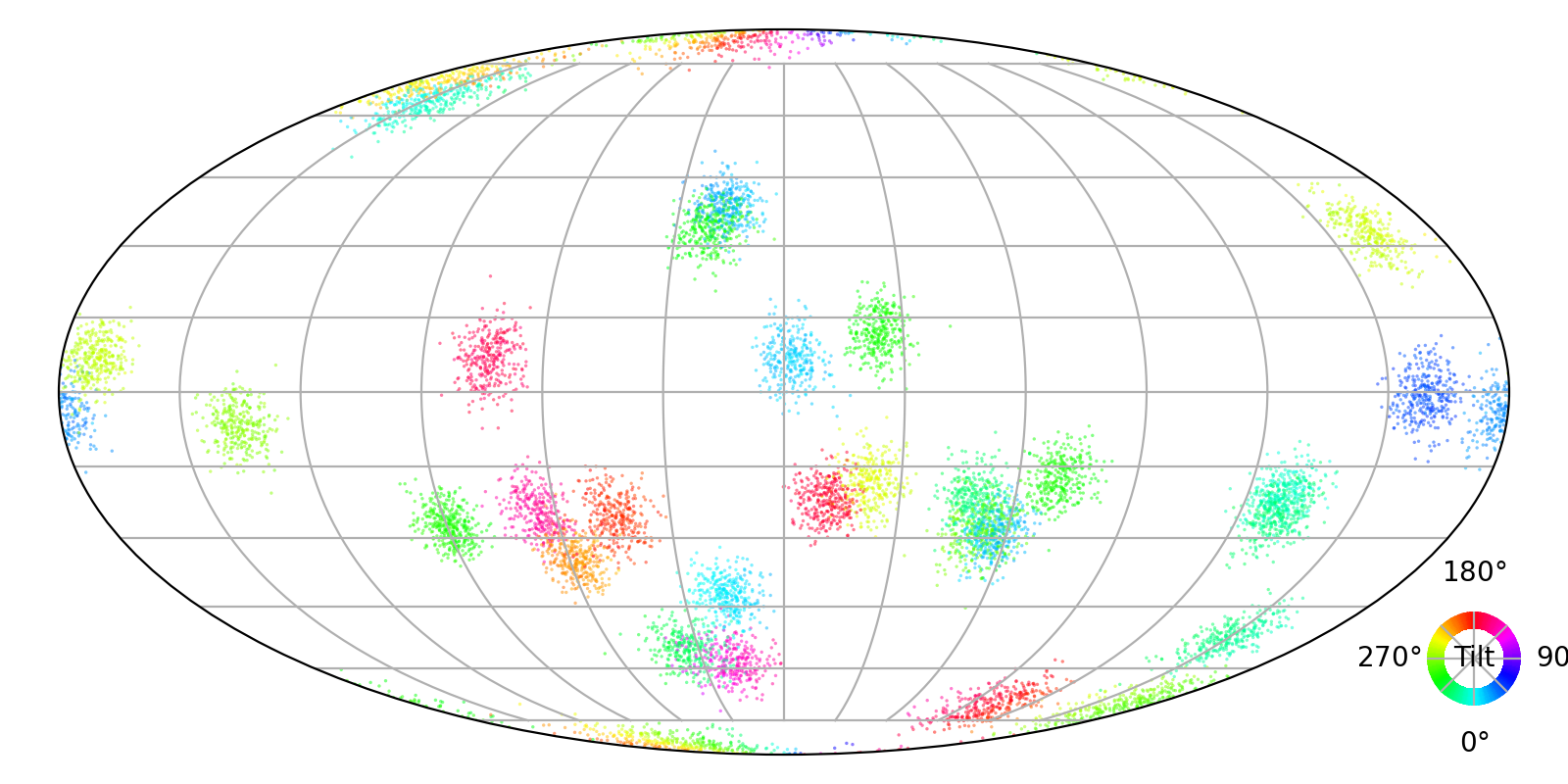}
    \includegraphics[width=0.4\linewidth]{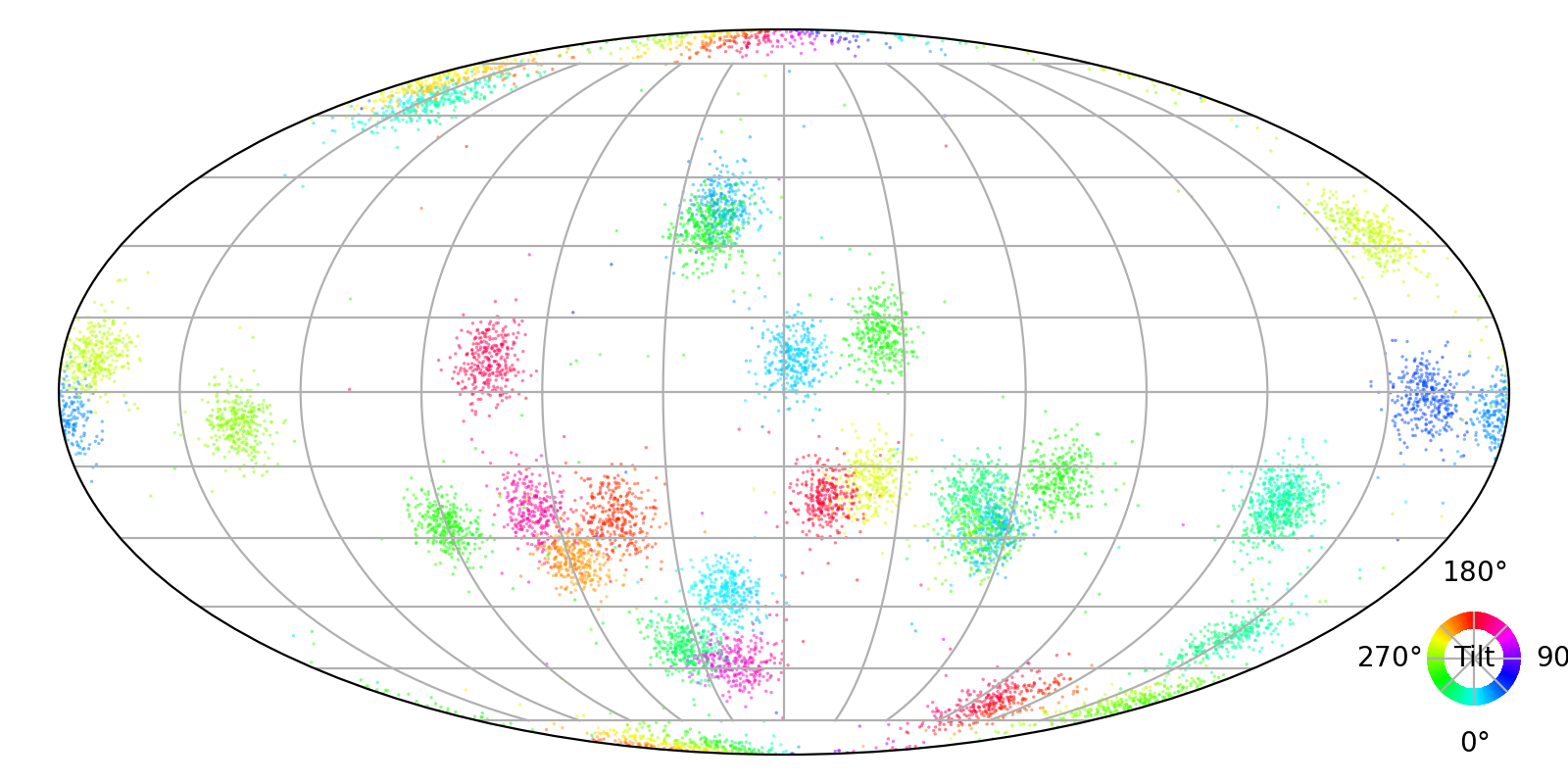}
    \caption{Manifold free-form flows on a synthetic $SO(3)$ mixture distribution with $M=64$ mixture components proposed by \citet{de2022riemannian}. \textit{(Left)} 10,000 samples each from the ground truth distribution and \textit{(right)} our model. This visualization computes three Euler angles, which fully describe a rotation matrix, and then plot the first two angles on the projection of a sphere and the last by color \cite{murphy2021implicit}. We find that our model nicely samples from the distribution with few outliers between the modes.}
    \label{fig:so3-samples}
\end{figure}

\begin{table*}[bht]
    \centering
    \caption{Test negative log-likelihood (NLL, $\downarrow$) on $SO(3)$ for multi-step and single-step methods.
    M-FFF consistently outperforms the specialized normalizing flow by \citet{liu2023delving} on synthetic distributions of $SO(3)$ matrices, and outperforms multi-step methods in the cases with more mixture components. Multi-step baseline values are due to \citet{de2022riemannian}.}
    \small
    \begin{tabular}{lcccl}
         \toprule
          & $M=16$ & $M=32$ & $M=64$ & Fast inference? \\
         \midrule
         Moser flow  \cite{rozen2021moser}&  -0.85\scalebox{0.6}{ $\pm$ 0.03} & -0.17\scalebox{0.6}{ $\pm$ 0.03} & 0.49\scalebox{0.6}{ $\pm$ 0.02} & \xmark: 1000 steps \\
         Exp-wrapped SGM \cite{de2022riemannian}& -0.87\scalebox{0.6}{ $\pm$ 0.04} & -0.16 \scalebox{0.6}{$ \pm$ 0.03} & 0.58\scalebox{0.6}{ $\pm$ 0.04} & \xmark: 500 steps \\
         Riemannian SGM \cite{de2022riemannian}& \textbf{-0.89\scalebox{0.6}{ $\pm$ 0.03}} & -0.20\scalebox{0.6}{ $\pm$ 0.03} & 0.49\scalebox{0.6}{ $\pm$ 0.02} & \xmark: 100 steps \\
         \midrule
         $SO(3)$-NF \cite{liu2023delving}& -0.81\scalebox{0.6}{ $\pm$ 0.01} & -0.12\scalebox{0.6}{ $\pm$ 0.004} & 0.61 \scalebox{0.6}{ $\pm$ 0.01} & \cmark\\
         M-FFF (ours) & \textbf{-0.87\scalebox{0.6}{ $\pm$ 0.02}} & \textbf{-0.21\scalebox{0.6}{ $\pm$ 0.02}} & \textbf{0.45\scalebox{0.6}{ $\pm$ 0.02}} & \cmark \\
         \bottomrule
    \end{tabular}
    \label{tab:so3-results}
\end{table*}

\paragraph{Earth data on the sphere}

\begin{table}
    \centering
    \caption{M-FFF significantly outperforms the previous single-step density estimator \cite{peel2001fitting} on the sphere on real-world earth datasets in terms of negative log-likelihood (lower is better). Baseline values are collected from \citet{de2022riemannian, huang2022riemannian, chen2024flow}.}
    \resizebox{\linewidth}{!}{
    \begin{tabular}{lccccl}
        \toprule
        & Volcano& Earthquake& Flood& Fire&  Fast inference? \\
        \midrule
        Riemannian CNF \cite{mathieu2020riemannian} & -6.05\scalebox{0.6}{ $\pm$ 0.61} & 0.14\scalebox{0.6}{ $\pm$ 0.23} & 1.11\scalebox{0.6}{ $\pm$ 0.19} & -0.80\scalebox{0.6}{ $\pm$ 0.54} & \xmark: $\sim$100 steps\\
        Moser flow \cite{rozen2021moser} & -4.21\scalebox{0.6}{ $\pm$ 0.17} & -0.16\scalebox{0.6}{ $\pm$ 0.06} & 0.57\scalebox{0.6}{ $\pm$ 0.10} & -1.28\scalebox{0.6}{ $\pm$ 0.05} & \xmark: $\sim$100 steps\\
        Stereographic score-based \cite{de2022riemannian} & -3.80\scalebox{0.6}{ $\pm$ 0.27} & -0.19\scalebox{0.6}{ $\pm$ 0.05} & 0.59\scalebox{0.6}{ $\pm$ 0.07} & -1.28\scalebox{0.6}{ $\pm$ 0.12} & \xmark: $\sim$100 steps\\
        Riemannian score-based \cite{de2022riemannian} & -4.92\scalebox{0.6}{ $\pm$ 0.25} & -0.19\scalebox{0.6}{ $\pm$ 0.07} & 0.45\scalebox{0.6}{ $\pm$ 0.17} & -1.33\scalebox{0.6}{ $\pm$ 0.06} & \xmark: $\sim$100 steps\\
        Riemannian diffusion \cite{huang2022riemannian} & -6.61\scalebox{0.6}{ $\pm$ 0.97} & \textbf{-0.40\scalebox{0.6}{ $\pm$ 0.05}} & 0.43\scalebox{0.6}{ $\pm$ 0.07} & -1.38\scalebox{0.6}{ $\pm$ 0.05} & \xmark: $>$100 steps\\
        Riemannian flow matching \cite{chen2024flow} & \textbf{-7.93 \scalebox{0.6}{$\pm$1.67}} & -0.28\scalebox{0.6}{ $\pm$ 0.08} & \textbf{0.42\scalebox{0.6}{ $\pm$ 0.05}} & \textbf{-1.86\scalebox{0.6}{ $\pm$ 0.11}} & \xmark: 1000 steps\\
        \midrule
        Mixture of Kent \cite{peel2001fitting}& -0.80\scalebox{0.6}{ $\pm$ 0.47} & 0.33\scalebox{0.6}{ $\pm$ 0.05} & 0.73\scalebox{0.6}{ $\pm$ 0.07} & -1.18\scalebox{0.6}{ $\pm$ 0.06} & \cmark  \\
        M-FFF (ours) & \textbf{-2.25\scalebox{0.6}{ $\pm$ 0.02}} & \textbf{-0.23\scalebox{0.6}{ $\pm$ 0.01}} & \textbf{0.51\scalebox{0.6}{ $\pm$ 0.01}} & \textbf{-1.19\scalebox{0.6}{ $\pm$ 0.03}} & \cmark \\
        \midrule
        Datset size & 827 & 6120 & 4875 & 12809 & \\
        \bottomrule
    \end{tabular}
    }
    \label{tab:earth}
\end{table}

We evaluate manifold free-form flows on spheres with datasets from the domain of earth sciences. We use four established datasets compiled by \citet{mathieu2020riemannian} for density estimation on $\S^2$: Volcanic eruptions \cite{volcano_dataset}, earthquakes \cite{earthquake_dataset}, floods \cite{flood_dataset} and wildfires \cite{fire_dataset}.

\Cref{fig:m-fff-overview} shows an example for a model trained on flood data. As the reconstruction error sometimes does not drop to a satisfactory level we employ the method described in \cref{app:Likelihood} to ensure that the measured likelihoods are accurate. \Cref{tab:earth} shows that M-FFF again outperforms the specialized single-step model; the performance compared to multi-step methods is mixed. We think that multi-step models have an advantage on the considered data, as there are large regions of empty space between highly concentrated data points (see density and sample plots in \cref{app:earth}).

\paragraph{Torsion angles of molecules on tori}

\begin{table*}[ht]
    \centering
    \caption{M-FFF consistently outperforms normalizing flows specialized to tori \cite{rezende2020normalizing} on torus datasets, without requiring the development of a specialized architecture. In addition, our method comes close to the performance of the multi-step methods and even outperforms them on the Glycine dataset. Baseline values are due to \citet{huang2022riemannian,chen2024flow}.}
    \resizebox{\linewidth}{!}{
    \begin{tabular}{lcccccl}
        \toprule
        & General & Glycine & Proline & Pre-Pro & RNA & Fast inference?\\
        \midrule
        Riemannian diffusion \cite{huang2022riemannian} & 1.04 \scalebox{0.6}{$\pm$ 0.012} & 1.97 \scalebox{0.6}{$\pm$ 0.012} & \textbf{0.12} \scalebox{0.6}{$\pm$ 0.011} & 1.24 \scalebox{0.6}{$\pm$ 0.004} & -3.70 \scalebox{0.6}{$\pm$ 0.592} & \xmark: $\sim$1000 steps \\
        Riemannian flow matching \cite{chen2024flow} & \textbf{1.01 \scalebox{0.6}{$\pm$ 0.025}} & 1.90 \scalebox{0.6}{$\pm$ 0.055} & 0.15 \scalebox{0.6}{$\pm$ 0.027} & \textbf{1.18 \scalebox{0.6}{$\pm$ 0.055}} & \textbf{-5.20 \scalebox{0.6}{$\pm$ 0.067}} & \xmark: 1000 steps \\
        \midrule
        Mixture of power spherical \cite{huang2022riemannian} & 1.15 \scalebox{0.6}{$\pm$ 0.002} & 2.08 \scalebox{0.6}{$\pm$ 0.009} & 0.27 \scalebox{0.6}{$\pm$ 0.008} & 1.34 \scalebox{0.6}{$\pm$ 0.019} & 4.08 \scalebox{0.6}{$\pm$ 0.368} & \cmark \\
        Circular Spline Coupling Flows \cite{rezende2020normalizing} & \textbf{1.03 \scalebox{0.6}{$\pm$ 0.01}} & 1.91 \scalebox{0.6}{$\pm$ 0.04} & 0.21 \scalebox{0.6}{$\pm$ 0.08} & 1.24 \scalebox{0.6}{$\pm$ 0.04} & -4.01 \scalebox{0.6}{$\pm$ 0.24} & \cmark \\
        M-FFF (ours) & \textbf{1.03 \scalebox{0.6}{$\pm$ 0.02}} & \textbf{{1.89 \scalebox{0.6}{$\pm$ 0.05}}}& \textbf{0.17 \scalebox{0.6}{$\pm$ 0.08}}& \textbf{{1.23 \scalebox{0.6}{$\pm$ 0.04}}} & \textbf{-4.27 \scalebox{0.6}{$\pm$ 0.09}} & \cmark \\
        \bottomrule
    \end{tabular}
    }
    \label{tab:tori-results}
\end{table*}

To benchmark manifold free-form flows on tori $\TT^n$, we follow \cite{huang2022riemannian} and evaluate our model on two datasets from structural biology. We consider the torsion (dihedral) angles of the backbone of protein and RNA substructures respectively. %

We represent a tuple of angles $(\phi_1, \dots, \phi_n) \in [0, 2\pi]^n$ by mapping each angle to a position on a circle: $X_i = (\cos \phi_i, \sin \phi_i) \in \S^1$. Then we stack all $X_i$ into a matrix $X \in \R^{n \times 2}$, compare \cref{tab:manifolds}.

The first dataset is comprised of 500 proteins assembled by \cite{lovell2003structure} and is located on $\TT^2$. The three dimensional arrangement of a protein backbone can be described by the so called Ramachandran angles \cite{ramachandran1963stereochemistry} $\Phi$ and $\Psi$, which represent the torsion of the protein backbone around the $N$-$C_\alpha$ and $C_\alpha$-$C$ bonds. The data is split into four distinct subsets \textit{General}, \textit{Glycine}, \textit{Proline} and \textit{Pre-Proline}, depending on the residue of each substructure.

The second dataset is extracted from a subset of RNA structures introduced by \citet{murray2003rna}. As the RNA backbone structure can be characterized by seven torsion angles, in this case we are dealing with data on $\TT^7$. 

We report negative log-likelihoods in \cref{tab:tori-results}, finding that M-FFF outperforms a circular spline coupling flow, a normalizing flow particularly developed for data on tori \citep{rezende2020normalizing} as well as the multi-step methods on one of the datasets. In addition to the quantitative results, we show the log densities of the M-FFF models for the four protein datasets in\cref{fig:ramachandran} in \cref{app:tori} .

\paragraph{Toy distributions on hyperbolic space}
We apply M-FFF to the Poincaré ball model, which embeds the 2-dimensional hyperbolic space $\HH^2$ of constant negative curvature -1 in the 2-dimensional Euclidean space $\RR^2$, as specified in \cref{tab:manifolds}. As this embedding is not isometric, and distances between points grow when moving away from the origin, we must include the last term of \cref{eq:manifold-cov} when changing variables under a map on this embedded manifold.

We show that M-FFF can be applied to non-isometric embeddings using \cref{eq:manifold-cov} and visualize learned densities in \cref{fig:m-fff-overview} and in \cref{fig:hyperbolic-toy-distributions} in \cref{app:hyperbolic} for several toy datasets defined on the 2-dimensional Poincaré ball model. Further details can be found in \cref{app:hyperbolic}.

\paragraph{Manifold with non-trivial curvature} Finally, we follow \citet{chen2024flow} and train M-FFF given by synthetic distributions on the Stanford bunny \citep{turk1994zippered} on the data provided with their paper, see \cref{fig:m-fff-overview}. The natural embedding of this mesh is $\R^3$, and we train a separate neural network to project from the embedding space to the mesh. This ensures that the projection is continuously differentiable, which we identify to be important for stable gradients.

\Cref{tab:bunny} shows that M-FFF performs well on this manifold, outperforming Riemannian flow matching in two out of three cases. This experiment underlines the flexibility of our model: We only need a projection function to the manifold in order to train a generative model.

\begin{table}
    \caption{Test NLL on Stanford bunny data proposed by \cite{chen2024flow}, living on a manifold with nontrivial curvature (see \cref{fig:m-fff-overview}). M-FFF outperforms the multi-step model for datasets with more modes. }
    \centering
    \resizebox{\linewidth}{!}{
    \begin{tabular}{lcccl}
        \toprule
        & $k=10$ & $k=50$ & $k=100$ & Fast inference? \\
        \midrule
        Riemannian Flow Matching (w/ diffusion) \cite{chen2024flow} & 1.16 \scalebox{0.6}{$\pm$ 0.02} & 1.48 \scalebox{0.6}{$\pm$ 0.01} & 1.53 \scalebox{0.6}{$\pm$ 0.01} & \xmark: 1000 steps \\
        Riemannian Flow Matching (w/ biharmonic) \cite{chen2024flow} & \textbf{1.06 \scalebox{0.6}{$\pm$ 0.05}} & 1.55 \scalebox{0.6}{$\pm$ 0.01} & 1.49 \scalebox{0.6}{$\pm$ 0.01} & \xmark: 1000 steps \\
        \midrule
        M-FFF (ours) & \textbf{1.21 \scalebox{0.6}{$\pm$ 0.01}} & \textbf{1.34 \scalebox{0.6}{$\pm$ 0.01}} & \textbf{1.28 \scalebox{0.6}{$\pm$ 0.01}} & \cmark \\
        \bottomrule
    \end{tabular}
    }
    \label{tab:bunny}
\end{table}

\section{Limitations}
\label{sec:limitations}

Manifold Free-From Flows achieve high generative quality on manifolds despite the approximations made during training: First, the exact inverse of the encoder Jacobian is approximated by the decoder Jacobian, which is implicitly regularized via the reconstruction loss (see \cref{eq:m-fff-nll-estimator}). Second, the final gradient computation in \cref{eq:fff-gradient-estimator} is estimated with a single $v$ for each item in the batch, adding noise to the system.

At inference time, the negative log-likelihoods we report in all tables are based on the \textit{decoder} Jacobian. We choose this because even if the decoder ends up not to be invertible after training (that is several latent codes $z$ yield the same generation $x=g_\phi(z)$), the computed densities are a conservative estimate of the true probability density. The downside is that if the reconstruction loss is high, the likelihoods become inaccurate, see \cref{app:Likelihood} for details. We therefore ensure that the final reconstruction losses are vanishing in \cref{tab:rec}.

From a high level perspective, we observe that M-FFF performs less favorable compared to multi-step methods when the density changes sharply or very low density regions are present.

\section{Conclusion}

In this paper, we present Manifold Free-Form Flows (M-FFF), a generative model designed for manifold data. To the best of our knowledge, it is the first generative model on manifolds with single-step sampling and density estimation readily applicable to arbitrary Riemannian manifolds. This significantly accelerates inference and allows for deployment on edge devices.

M-FFF matches or outperforms single-step architectures specialized to particular manifolds. It also surpasses multi-step methods in several cases, despite reducing the inference compute by typically two orders of magnitude.

Adapting M-FFF to new manifolds is straightforward and only requires selecting an embedding space and a projection to the manifold. In contrast, competing multi-step methods are more challenging to adapt as they require implementing a diffusion process or computing distances on the manifold.

\section*{Acknowledgements}

This work is supported by Deutsche Forschungsgemeinschaft (DFG, German Research Foundation) under Germany's Excellence Strategy EXC-2181/1 - 390900948 (the Heidelberg STRUCTURES Cluster of Excellence). It is also supported by the Vector Stiftung in the project TRINN (P2019-0092), and by Informatics for Life funded by the Klaus Tschira Foundation. AR acknowledges funding from the Carl-Zeiss-Stiftung.
The authors acknowledge support by the state of Baden-Württemberg through bwHPC and the German Research Foundation (DFG) through grant INST 35/1597-1 FUGG.

{
\bibliography{main}
\bibliographystyle{plainnat}
}

\newpage

\appendix

\section{Free-form flows on Riemannian manifolds}
\label{app:fff-on-manifolds}

In this appendix, we will focus on intuitive definitions of concepts from topology and differential geometry. For a more rigorous treatment of these concepts, see \cite{jost2008riemannian}.

An $n$-dimensional manifold $\M$ is a space where every point $x$ has a neighborhood which is homeomorphic to an open subset of $\R^n$. Intuitively, this means that there is a small region of $\M$ containing $x$ which can be bent and stretched in a continuous way to map onto a small region in $\R^n$. This is what is meant when we say that the manifold locally resembles $\R^n$.
If all these maps from $\M$ to $\R^n$ are also differentiable then the manifold itself is differentiable, as long as there is a way to connect up the local neighborhoods in a differentiable and consistent way.

The tangent space of the manifold at $x$, denoted $\T_x \M$, is an $n$-dimensional Euclidean space, which is a linearization of the manifold at $x$: if we zoom in to a very small region around $x$ the manifold looks flat, and this flat Euclidean space is aligned with the tangent space. Because the tangent space is a linearization of the manifold, this is where derivatives on the manifold live, e.g.~if $f: \M_X \rightarrow \M_Z$ is a map between two manifolds, then the Jacobian $f'(x)$ is a linear map from $\T_x \M_X$ to $\T_{f(x)} \M_Z$.

A Riemannian manifold $(\M, G)$ is a differentiable manifold which is equipped with a Riemannian metric $G: \T_x \M \times \T_x \M \rightarrow \R$ which defines an inner product on the tangent space, which allows us to calculate lengths and angles in this space. The length of a smooth curve $\gamma: [0,1] \rightarrow \M$ is given by the integral of the length of its velocity vector $\gamma'(t) \in \T_{\gamma(t)} \M$. This ultimately allows us to define a notion of distance on the manifold, as the curve of minimal length connecting two points. 

In the remainder of the appendix we only consider Riemannian manifolds.

\subsection{Manifold change of variables}
\label{app:manifold-cov}

\paragraph{Embedded manifolds}
We define an $n$-dimensional manifold embedded in $\R^m$ via a projection function
\begin{equation}
    \proj: \PP \rightarrow \R^m
\end{equation}
where $\PP \subseteq \R^m$ is the projectable set. We require the projection to have the following properties (the first is true of all projections, the others are additional requirements):
\begin{enumerate}
    \item $\proj \circ \proj = \proj$ 
    \item $\proj$ is smooth on $\PP$
    \item $\operatorname{rank}(\proj'(\proj(x))) = n$ for all $x \in \PP$
\end{enumerate}
Given such a projection, we define a manifold by
\begin{equation}
    \M = \{x \in \R^m: \proj(x) = x\}
\end{equation}
with the tangent space
\begin{equation}
    \T_x \M = \operatorname{col}(\proj'(x))
\end{equation}
where $\operatorname{col}$ denotes the column space. Since the rank of $\proj'(x)$ with $x \in \M$ is $n$, the tangent space is $n$-dimensional and $\M$ is an $n$-dimensional manifold. To avoid clutter we denote the Riemannian metric and its $m \times m$ matrix representation with $G$ interchangeably. If $\M$ is isometrically embedded then $G(x)$ is just the identity matrix.

The Jacobian of the projection is a projection matrix, meaning $\proj'(x) \proj'(x) = \proj'(x)$ for $x \in \M$. For any $v$ in the column space of $\proj'(x)$, there is a $u$ such that $v = \proj'(x)u$ and due to the projection property, $\proj'(x) v = \proj'(x) u = v$. Similarly, for any $w$ in the row space of $\proj'(x)$, $w \proj'(x) = w$. If $\proj$ is an orthogonal projection, $\proj'$ is symmetric by definition and hence the row and column spaces are identical.

\paragraph{Integration on embedded manifolds} In order to perform integration on the manifold, we cannot work directly in the $m$-dimensional coordinates of the embedding space, instead we have to introduce some local $n$-dimensional coordinates. This means that the domain of integration has to be diffeomorphic to an open set in $\R^n$. Since this might not be the case for the whole region of integration, we might need to partition it into such regions and perform integration on each individually (each such region, together with its map to $\R^n$, is known as a chart and a collection of charts is an atlas). For example, if we want to integrate a function on the sphere, we could split the sphere into two hemispheres and integrate each separately. A hemisphere can be continuously stretched and flattened into a 2-dimensional region, whereas the whole sphere cannot without creating discontinuities. 

Given an open set $U$ in $\R^n$, and a diffeomorphic local embedding function $\phi: U \rightarrow \M$, the integral of a scalar function $p: \M \rightarrow \R$ on $\phi(U) \subseteq \M$ is
\begin{equation}
    \int_{\phi(U)} p dV = \int_U (p \circ \phi) \sqrt{|\phi'(u)^T G(\phi(u)) \phi'(u)|} du^1 \cdots du^n.
\end{equation}
The integral on the right is an ordinary integral in $\R^n$. The quantity inside the determinant is known as the pullback metric.

\newtheorem*{embedded-manifold-cov-appendix}{Theorem~\ref{thm:embedded-manifold-cov}}

\begin{embedded-manifold-cov-appendix}[Manifold change of variables] 
    \label{thm:embedded-manifold-cov-appendix}
    Let $(\M_X, G_X)$ and $(\M_Z, G_Z)$ be $n$-dimensional Riemannian manifolds embedded in $\R^m$, i.e., $\M_X, \M_Z \subseteq \R^m$, and assume they have the same global topological structure. Let $p_X$ be a probability distribution on $\M_X$ and let $f: \M_X \rightarrow \M_Z$ be a diffeomorphism. Let $p_Z$ be the pushforward of $p_X$ under $f$ (i.e., if $p_X$ is the probability density of $X$, then $p_Z$ is the probability density of $f(X)$). 
    
    Let $x \in \M_X$. Define $Q \in \R^{m \times n}$ as an orthonormal basis for $\T_x \M_X$ and $R \in \R^{m \times n}$ as an orthonormal basis for $\T_{f(x)} \M_Z$.
    
    Then, the probability densities $p_X$ and $p_Z$ are related under the change of variables $x \mapsto f(x)$ by the following equation:
    \begin{equation}
        \label{eq:manifold-cov-appendix}
        \log p_X(x) = \log p_Z(f(x)) + \log |R^T f'(x) Q| + \tfrac{1}{2} \log \frac{|R^T G_Z(f(x)) R|}{|Q^T G_X(x) Q|}.
    \end{equation}
    where $Q$ and $R$ depend on $x$ and $f(x)$, respectively, although this dependency is omitted for brevity.
\end{embedded-manifold-cov-appendix}

Below, we provide two versions of the proof, the second being a less rigorous and more geometric variant of the first.

\begin{proof}
Let $\phi: \R^n \rightarrow \M_X$ be defined by $\phi(u) = \proj_X(x + Qu)$. Let $U$ be an open subset of $\R^n$ containing the origin which is small enough so that $\phi$ is bijective. Let $\psi: \R^n \rightarrow \M_Z$ be defined by $\psi(w) = \proj_Z(f(x) + Rw)$. Define $\varphi = \psi^{-1} \circ f \circ \phi$ and let $W = \varphi(U)$.

Note that $\phi'(u) = \proj_X'(x + Qu) \cdot Q$ and hence $\phi'(0) = \proj_X'(x) Q = Q$ (since each column of $Q$ is in $\T_x \M_X = \operatorname{col}(\proj_X'(x))$).

Similarly, $\psi'(0) = R$. Since $\psi$ is a map from $n$ to $m$ dimensions, there is not a unique function from $\R^m$ to $\R^n$ which is $\psi^{-1}$ on the manifold and there are remaining degrees of freedom in the off-manifold behavior which can result in different Jacobians. For our purposes, we define the inverse $\psi^{-1}$ such that $\psi \circ \psi^{-1}$ is an orthogonal projection onto $\M_Z$. This means $\psi'(\psi^{-1}(f(x))) (\psi^{-1})'(f(x)) = RR^T$ and hence $(\psi^{-1})'(f(x)) = R^T$.

Since $p_Z$ is the pushforward of of $p_X$ under $f$, the amount of probability mass contained in $\phi(U)$ is the same as that contained in $f(\phi(U)) = \psi(W)$:
\begin{equation}
    \int_{\phi(U)} p_X(x) dV_X = \int_{\psi(W)} p_Z(z) dV_Z
\end{equation}
and therefore:
\begin{multline}
    \int_U p_X(\phi(u)) \sqrt{|\phi'(u)^T G_X(\phi(u)) \phi'(u)|} du^1 \cdots du^n \\= \int_W p_Z(\psi(w)) \sqrt{|\psi'(w)^T G_Z(\psi(w)) \psi'(w)|} dw^1 \cdots dw^n.
\end{multline}
Changing variables of the RHS with $w = \varphi(u)$ gives us
\begin{multline}
    \int_U p_X(\phi(u)) \sqrt{|\phi'(u)^T G_X(\phi(u)) \phi'(u)|} du^1 \cdots du^n \\= \int_U p_Z(f(\phi(u))) \sqrt{|\psi'(\varphi(u))^T G_Z(f(\phi(u))) \psi'(\varphi(u))|} \cdot \left|\frac{\partial w}{\partial u}\right| du^1 \cdots du^n.
\end{multline}
Since $U$ was arbitrary, we can make it arbitrarily small, demonstrating that the integrands must be equal for $u = 0$:
\begin{equation}
    p_X(x) \sqrt{|Q^T G_X(x) Q|} = p_Z(f(x)) \sqrt{|R^T G_Z(f(x)) R|} \cdot \left|\frac{\partial w}{\partial u}\right|.
\end{equation}
Since $w = \psi^{-1}(f(\phi(u)))$, the Jacobian has the following form when evaluated at the origin (note $\phi(0) = x$):
\begin{align}
    \frac{\partial w}{\partial u} 
        &= (\psi^{-1})'(f(x)) \cdot f'(x) \cdot \phi'(0) \\
        &= R^T f'(x) Q.
\end{align}
Substituting this into the equality, rearranging and taking the logarithm gives the result:
\begin{equation}
    \log p_X(x) = \log p_Z(f(x)) + \log |R^T f'(x) Q| + \tfrac{1}{2} \log \frac{|R^T G_Z(f(x)) R|}{|Q^T G_X(x) Q|}.
\end{equation}
\end{proof}

\paragraph{Alternative proof}
Here is a less rigorous and more geometric proof, which may be more intuitive for some readers.

\begin{proof}
Let $x$ be a point on $\M_X$. Consider a small square region $U \subseteq \M$ around $x$ (hypercubic region in higher dimensions). If the sides of the square are small enough, the square is approximately tangent to the manifold since the manifold looks very flat if we zoom in. Suppose $Q$ is a basis for the tangent space at $x$ and $q^1, \dots, q^n$ are the columns of $Q$. Suppose that the sides of the square (or hypercube) are spanned by $u^i = \epsilon q^i$ for a small $\epsilon$. The volume spanned by a parallelotope (higher-dimensional analog of a parallelogram) is the square root of the determinant of the Gram matrix of inner products:
\begin{equation}
    \operatorname{vol}(u^1, \dots, u^n) = \sqrt{|\langle u^i, u^j \rangle|}.
\end{equation}
The inner product is given by $G$, namely $\langle u, v \rangle = u^T G v$.
We can therefore write the volume of $U$ as
\begin{equation}
    \operatorname{vol}(U) \approx \epsilon^n\sqrt{|Q^T G Q|}.
\end{equation}

Now consider how $U$ is transformed under $f$. It will be mapped to a region $f(U)$ on $\M_z$ with approximately straight edges, forming an approximate parallelotope in the tangent space at $z = f(x)$. This region will be spanned by the columns of $f'(x)\epsilon Q$ (since $f(x + u^i) \approx f(x) + f'(x)u^i$) and hence will have a volume of 
\begin{align}
    \operatorname{vol}(f(U)) 
        &\approx \epsilon^n \sqrt{|Q^T f'(x)^T G_Z(z) f'(x) Q|} \\
        &= \epsilon^n \sqrt{|Q^T f'(x)^T RR^T G_Z(z) RR^T f'(x) Q|} \\
        &= \epsilon^n |R^T f'(x) Q| \sqrt{|R^T G_Z(z) R|}
\end{align}
where $R$ is a basis for the tangent space at $f(x)$. We can introduce $RR^T$ into the expression since it is a projection in the tangent space at $f(x)$ and is essentially the identity within that space. Since the RHS of $G_Z(z)$ and the LHS of $f'(x)$ both live in this tangent space, we can introduce $RR^T$ between them without changing the expression. Then in the last step we use that $|AB| = |A||B|$ for square matrices.

The probability density in $U$ and $f(U)$ should be roughly constant since both regions are very small. Since the probability mass in both regions should be the same we can write
\begin{equation}
    p_X(x) \operatorname{vol}(U) \approx p_Z(f(x)) \operatorname{vol}(f(U))
\end{equation}
and therefore
\begin{equation}
    p_X(x) = p_Z(f(x)) |R^T f'(x) Q| \frac{\sqrt{|R^T G_Z(f(x)) R|}}{\sqrt{|Q^T G_X(x) Q|}}
\end{equation}
where the approximation becomes exact by taking the limit of infinitesimally small $\epsilon$. Taking the logarithm, we arrive at the result of \cref{thm:embedded-manifold-cov}.
\end{proof}

\subsection{Loss function}
\label{app:loss}
\newtheorem*{loss-function-trace-appendix}{Theorem~\ref{thm:loss-function-trace}}

\begin{loss-function-trace-appendix}
\label{thm:loss-function-trace-appendix}
Under the assumptions of \cref{thm:embedded-manifold-cov} with $f = f_\theta$. Let $v \in \R^m$ be a random variable with zero mean and covariance $R R^T$. Then, the derivative of the change of variables term has the following trace expression, where $z = f_\theta(x)$:
\begin{align}
    \nabla_\theta \log |R^T f_\theta'(x) Q| 
        &= \tr(R^T (\nabla_\theta f_\theta'(x)) {f_\theta^{-1}}'(z) R) \\
        &= \E_v \left[ v^T (\nabla_\theta f_\theta'(x)) {f_\theta^{-1}}'(z) v \right].
\end{align}
\end{loss-function-trace-appendix}

\begin{proof}
For brevity, we drop the index $\theta$ and denote $g = f^{-1}$.
First, a reminder that $\varphi'(u) = R^T f'(x) Q$ with $\varphi = \psi^{-1} \circ f \circ \phi$. Let $\chi = \varphi^{-1}$, i.e.~$\chi = \phi^{-1} \circ g \circ \psi$. Jacobi's formula tells us that
\begin{equation}
    \frac{d}{dt}\log|A(t)| = \tr\left(\frac{dA(t)}{dt} A(t)^{-1}\right).
\end{equation}
Note also that since $\chi(\varphi(u)) = u$, therefore $\chi'(\varphi(u)) \varphi'(u) = I$ and $\chi'(\varphi(u)) = \varphi'(u)^{-1}$.
Applying Jacobi's formula to $\varphi'(u)$:
\begin{align}
    \nabla_\theta \log |\varphi'(u)| 
        &= \tr((\nabla_\theta \varphi'(u)) \varphi'(u)^{-1}) \\
        &= \tr((\nabla_\theta \varphi'(u)) \chi'(\varphi(u)))
\end{align}
and substituting in $f$ and $g$:
\begin{equation} \label{eq:loss-function-proof-1}
    \nabla_\theta \log |R^T f'(x) Q| = \tr(\nabla_\theta(R^T f'(x) Q) Q^T g'(f(x)) R).
\end{equation}
$Q$ does not depend on $\theta$, but $R$ depends on $f(x)$ and hence $\theta$, so it must be considered in the derivative. However, 
\begin{equation}
    \nabla_\theta \tr(R R^T) = \tr((\nabla_\theta R) R^T + R \nabla_\theta R^T) =  2\tr(R \nabla_\theta R^T)
\end{equation}
and since $\tr(RR^T) = \tr(I)$ is a constant, $\tr(R \nabla_\theta R^T) = 0$. Expanding \cref{eq:loss-function-proof-1}:
\begin{equation}
    \nabla_\theta \log |R^T f'(x) Q| = \tr(\nabla_\theta(R^T) f'(x) Q Q^T g'(f(x)) R) + \tr(R^T \nabla_\theta(f'(x)) Q Q^T g'(f(x)) R).
\end{equation}
Since $Q$ is an orthonormal basis for $\T_x \M_X$, $QQ^T$ is a projection matrix onto $\T_x \M_X$. This is because $(QQ^T)^2 = QQ^T QQ^T = QQ^T$, using $Q^T Q = I$. As a result, $QQ^T \proj'(x) = \proj'(x)$. Since $g$ can also be written inside a projection: $g(z) = \proj_Z(g(z))$, therefore $g'(z) = \proj_Z'(\tilde g(z))\tilde g'(z)$, so $QQ^T g'(z) = g'(z)$. Note also that $f'(x)g'(f(x)) = I$ since $f \circ g = \operatorname{id}$. This simplifies the equation:
\begin{equation}
    \nabla_\theta \log |R^T f'(x) Q| = \tr(\nabla_\theta(R^T) R) + \tr(R^T \nabla_\theta(f'(x)) g'(f(x)) R)
\end{equation}
and finally
\begin{equation}
    \nabla_\theta \log |R^T f'(x) Q| = \tr(R^T \nabla_\theta(f'(x)) g'(f(x)) R).
\end{equation}
\end{proof}

In the above proof we used the fact that $QQ^T g'(z) = g'(z)$, where we dropped the index $\theta$ and use $g := f^{-1}$ for brevity. Can we use $RR^T f'(x) = f'(x)$ to simplify the equation further? No, we cannot, since the expression involving $f'$ is actually its derivative with respect to parameters, which may not have the same matrix structure as $f'$. Is it instead possible to use $g'(z) RR^T = g'(z)$ for simplification? If $g$ is implemented as $\proj_Z(\tilde g(z))$, this is not necessarily true, as $g'(z)$ might not be a map from the tangent space at $z$ to the tangent space at $g(z)$. For example, if we add a small deviation $v$ to $z$, where $v$ is orthogonal to the tangent space at $z$, then $g(z + v)$ might not equal $g(z)$. However, this would mean that derivatives in the off-manifold direction can be non-zero, meaning that $g'(z) v \neq g'(z) RR^T v = 0$ (since $RR^T$ will project $v$ to 0). We can change this by prepending $g$ by a projection:
\begin{equation}
    g = \proj_X \circ \tilde g \circ \proj_Z.
\end{equation}
If $\proj_Z$ is an orthogonal projection, meaning that $\proj_Z'$ is symmetric, the column space and row space of $\proj_Z$ will both be the same as those of $RR^T$, meaning $\proj_Z'(z) RR^T = \proj_Z'$ and hence $g'(z) RR^T = g'(z)$. This leads to the following corollary:

\begin{corollary}
Suppose the assumptions of \cref{thm:embedded-manifold-cov} hold with $f = f_\theta$ and the following implementation:
\begin{equation}
    f_\theta^{-1} = \proj_X \circ f_\theta^{-1} \circ \proj_Z
\end{equation}
where $\proj_Z$ is an orthogonal projection. Then the derivative of the change of variables term has the following trace expression, where $z = f_\theta(x)$:
\begin{equation}
    \nabla_\theta \log |R^T f_\theta'(x) Q| = \tr((\nabla_\theta f'(x)) (f_\theta^{-1})'(z)).
\end{equation}
\end{corollary}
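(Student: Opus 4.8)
The plan is to start directly from \cref{thm:loss-function-trace}, which already supplies the identity
\[
\nabla_\theta \log |R^T f'(x) Q| = \tr(R^T (\nabla_\theta f'(x)) g'(f(x)) R),
\]
and to show that under the stated implementation of $g$ the basis matrices $R$ and $R^T$ can be absorbed. First I would apply the cyclic property of the trace to rewrite the right-hand side as $\tr((\nabla_\theta f'(x)) g'(f(x)) RR^T)$. It then suffices to prove the single matrix identity $g'(f(x)) RR^T = g'(f(x))$, i.e.\ that $g'$ evaluated at a manifold point is unchanged when composed on the right with the orthogonal projector $RR^T$ onto $\T_{f(x)}\M_Z$.

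To establish that identity I would differentiate the composition $g = \proj_X \circ \tilde g \circ \proj_Z$ by the chain rule, obtaining $g'(z) = \proj_X'(\cdot)\, \tilde g'(\cdot)\, \proj_Z'(z)$, so that the rightmost factor is $\proj_Z'(z)$. Because $\proj_Z$ is an \emph{orthogonal} projection, $\proj_Z'(z)$ is symmetric, hence its row space equals its column space, which by the construction of the embedded manifold (the ``Embedded manifolds'' paragraph) is exactly $\T_z \M_Z = \operatorname{col}(R)$. Since $RR^T$ is the orthogonal projector onto $\operatorname{col}(R)$, right-multiplication by $RR^T$ fixes any matrix whose row space lies in $\operatorname{col}(R)$; in particular $\proj_Z'(z) RR^T = \proj_Z'(z)$. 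Propagating this through the chain-rule factorization gives $g'(z) RR^T = g'(z)$, and specializing to $z = f(x)$ and substituting back into the trace yields $\nabla_\theta \log |R^T f'(x) Q| = \tr((\nabla_\theta f'(x)) g'(f(x)))$.

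I do not anticipate a genuine obstacle: the statement is essentially a corollary of \cref{thm:loss-function-trace} combined with the elementary projection bookkeeping already rehearsed in the discussion preceding it. The only point demanding a little care is justifying why $RR^T$ and $\proj_Z'(z)$ share the same column/row space — this is precisely where the extra hypothesis that $\proj_Z$ be orthogonal (rather than merely idempotent) enters. Without orthogonality, $\proj_Z'(z)$ need not be symmetric, its row space could differ from $\T_z\M_Z$, and the cancellation $\proj_Z'(z) RR^T = \proj_Z'(z)$ would fail, so I would be careful to flag that assumption at the exact step it is used.
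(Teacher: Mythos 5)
Your proposal is correct and follows essentially the same route as the paper: apply the cyclic property of the trace to the result of \cref{thm:loss-function-trace} and then absorb $RR^T$ using $g'(f(x))RR^T = g'(f(x))$, which the paper likewise derives from the chain-rule factorization of $g = \proj_X \circ \tilde g \circ \proj_Z$ together with the symmetry of $\proj_Z'$ for an orthogonal projection. The only difference is presentational — the paper establishes the identity $g'(z)RR^T = g'(z)$ in the discussion preceding the corollary and cites it, whereas you spell it out inside the proof.
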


\begin{proof}
Again, we drop the index $\theta$ and let $g = f^{-1}$ for brevity.
Take the result of \cref{thm:loss-function-trace} and use the cyclic property of the trace and the properties of $g'$ discussed above:
\begin{align}
    \tr(R^T \nabla_\theta(f'(x)) g'(f(x)) R) 
        &= \tr(\nabla_\theta(f'(x)) g'(f(x)) RR^T) \\
        &= \tr(\nabla_\theta(f'(x)) g'(f(x))).
\end{align}
\end{proof}

We use Hutchinson-style trace estimators to approximate the traces given above. This uses the property that, for a matrix $A \in \R^{n \times n}$ and a distribution $p(v)$ in $\R^n$ with unit second moment (meaning $\E[vv^T] = I$),
\begin{align}
    \E_{p(v)}[ v^T A v ] 
        &= \tr( \E_{p(v)}[ v^T A v ] ) \\
        &= \tr( \E_{p(v)}[ v v^T ] A ) \\
        &= \tr(A)
\end{align}
meaning that $v^T A v \approx \tr(A)$ is an unbiased estimate of the trace of $A$.

We have two variants of the trace estimate derived above, one evaluated in $\R^n$, the other in $\R^m$. The first can be estimated using the following equality (again dropping the index $\theta$ and using $g = f^{-1}$):
\begin{align}
    \tr(R^T &\nabla_\theta(f'(x)) g'(f(x)) R) \nonumber \\
        &= \E_{p(u)} [ u^T R^T \nabla_\theta(f'(x)) g'(f(x)) R u ] \\
        &= \E_{p(v)} [ v^T \nabla_\theta(f'(x)) g'(f(x)) v ] \\
        &= \nabla_\theta \E_{p(v)} [ v^T \nabla_\theta(f'(x)) \texttt{SG}[g'(f(x))] v ]
\end{align}
where $p(u)$ has unit second moment in $\R^n$ and $p(v)$ is the distribution of $Ru$, which lies in the tangent space at $x$ and has unit second moment in that space by which we mean $\E[vv^T] = RR^T$. An example of such a distribution is the standard normal projected to the tangent space, i.e.~$v = RR^T \tilde v$ where $\tilde v$ is standard normal.

In the second case, we can just sample from a distribution with unit second moment in the embedding space $\R^m$:
\begin{equation}
    \tr(\nabla_\theta(f'(x)) g'(f(x))) = \nabla_\theta \E_{p(v)} [ v^T \nabla_\theta(f'(x)) \texttt{SG}[g'(f(x))] v ].
\end{equation}

\subsection{Error bound}
\label{sec:error-bound}

The error bound on the gradient from \citet[Theorem 4.2]{draxler2024freeform} can be readily extended to Riemannian manifolds:

\begin{theorem}
\label{thm:error-bound}
    Carry over the assumptions of \cref{thm:embedded-manifold-cov} with $f = f_\theta$ and let $g_\phi$ be a manifold-to-manifold function. 
    Let $J_{f_\theta} = f_\theta'(x)$, $J_{g_\phi} = g_\phi'(z)$, and $J_{f_\theta^{-1}} = f_\theta^{-1\prime}(z)$. Then:
    \begin{equation}
        \left| \operatorname{tr}(R^T (\nabla_{\theta} J_{f_\theta}) J_{g_\phi} R) - \nabla_{\theta} \log |R^T J_{f_\theta} Q| \right| 
        \leq \| R^T (\nabla_{\theta} J_{f_\theta}) J_{f_\theta^{-1}} R \|_F \| R^T J_{f_\theta} J_{g_\phi} R - \mathbb{I}_n \|_F.
    \end{equation}
\end{theorem}

\begin{proof}
    The proof closely follows \cite{draxler2024freeform} and utilizes the Cauchy–Schwarz inequality for the Frobenius inner product, which states that for matrices $A$ and $B$, we have $|\operatorname{tr}(A^T B)| \leq \|A\|_F \|B\|_F$. Applying this to our case:
    \begin{align}
        &\left| \operatorname{tr}(R^T (\nabla_{\theta} J_{f_\theta}) J_{g_\phi} R) - \nabla_{\theta} \log |R^T J_{f_\theta} Q| \right| \\
        &= \left| \operatorname{tr}(R^T (\nabla_{\theta} J_{f_\theta}) J_{g_\phi} R) - \operatorname{tr}(R^T (\nabla_\theta J_{f_\theta}) J_{f_\theta^{-1}} R) \right| \\
        &= \left| \operatorname{tr}(R^T (\nabla_{\theta} J_{f_\theta}) (J_{g_\phi} - J_{f_\theta^{-1}}) R) \right|.
    \end{align}
    We can re-express this term by introducing the identity matrix in terms of the Jacobians of $f_\theta$ and its inverse:
    \begin{align}
        &= \left| \operatorname{tr}(R^T (\nabla_{\theta} J_{f_\theta}) J_{f_\theta^{-1}} R \cdot R^T (J_{f_\theta} J_{g_\phi} - \mathbb{I}_m) R) \right|.
    \end{align}
    By applying the Cauchy–Schwarz inequality, we obtain the bound:
    \begin{align}
        &\leq \| R^T (\nabla_{\theta} J_{f_\theta}) J_{f_\theta^{-1}} R \|_F \cdot \| R^T J_{f_\theta} J_{g_\phi} R - \mathbb{I}_n \|_F.
    \end{align}
    To further clarify, we recall the function $\varphi$ introduced in the proof of \cref{thm:embedded-manifold-cov}. This yields:
    \begin{equation}
        \mathbb{I}_n = J_{\varphi^{-1}} J_{\varphi} = Q^T J_{f_\theta^{-1}} R R^T J_{f_\theta} Q.
    \end{equation}
    Thus, we can represent $J_{g_\phi}$ as:
    \begin{align}
        J_{g_\phi} &= Q Q^T J_{g_\phi} \\
        &= Q (Q^T J_{f_\theta^{-1}} R R^T J_{f_\theta} Q) Q^T J_{g_\phi} \\
        &= J_{f_\theta^{-1}} R R^T J_{f_\theta} J_{g_\phi},
    \end{align}
    where we used $J_{g_\phi} = Q Q^T J_{g_\phi}$ and $J_{f_\theta^{-1}} = Q Q^T J_{f_\theta^{-1}}$ using similar reasoning to the proof of \cref{thm:loss-function-trace}.
\end{proof}

\subsection{Variance reduction}
\label{app:variance-reduction}

When using a Hutchinson trace estimator with standard normal $v \in R^n$, we can reduce the variance of the estimate by scaling $v$ to have length $\sqrt{n}$ (see \cite{girard1989fast}). The scaled variable will still have zero mean and unit covariance so the estimate remains unbiased, but the variance is reduced, with the effect especially pronounced in low dimensions. 

While we can take advantage of this effect in both our options for trace estimator, the effect is more pronounced in lower dimensions, so we reduce the variance more by estimating the trace in an $n$-dimensional space rather than an $m$-dimensional space. Hence the first version of the trace estimator, where $v$ is sampled from a distribution in $\T_x \M_X$ is preferable in this regard.

Let's provide some intuition with an example. Suppose $n=1$, $m=2$ and $R = (1, 0)^T$. We want to estimate the trace of $A = \operatorname{diag}(1, 0)$. Using the first estimator, we first sample $v = RR^T \tilde v$ with $\tilde v$ standard normal which results in $v = (u, 0)^T$ where $u \in \R$ is standard normal. Then we scale $v$ so it has length $\sqrt{n} = 1$. This results in $v = (r, 0)^T$ where $r$ is a Rademacher variable (taking the value $-1$ and $1$ with equal probability). The trace estimate is therefore $r^2 = 1$, meaning we always get the correct answer, so the variance is zero. The second estimator samples $v$ directly from a 2d standard normal, then scales it to have length $\sqrt{m} = \sqrt{2}$. Hence $v$ is sampled uniformly from the circle with radius $\sqrt{2}$. We can write $v = \sqrt{2}(\cos \theta, \sin \theta)^T$ with $\theta$ sampled uniformly in $[0, 2\pi]$. The estimate $v^T A v = 2\cos^2 \theta$. This is a random variable whose mean is indeed 1 as required but has a nonzero variance, showing that the variance is higher when estimating in the $m$-dimensional space.

For this reason, we choose the first estimator, sampling $v$ in the tangent space at $x$. This also simplifies the definition of $g$, meaning that we don't have to prepend it with a projection.

\subsection{Scaling behavior}
\label{app:scaling}

Since generating in high-dimensional spaces can raise concerns about an estimator's scaling behavior, we argue theoretically that the free-form flow estimator scales comparably to flow matching as dimensions increase. Experimental results further support this, showing that non-Riemannian free-form flows exhibit strong scaling performance in spaces up to 261 dimensions \citep{draxler2024freeform}.

One metric to assess scaling behavior is to consider the variance of the gradient estimator. With standard normal noise, the variance of the trace estimator $v^T A v$ is $2\|A\|_F^2$ \citep{hutchinson1989stochastic}. If we apply this to a simple linear free-form flow model $f(x) = Ax$ and $g(z) = Bz$, we find that summing up the variances of the gradient estimates with respect to each element of $A$ leads to a total variance of $2n \tr(BB^T)$. Note that in a converged model, $BB^T$ is equal to the covariance of the data. A similar calculation for a flow matching loss of the form $\frac{1}{2} \| Ax - y \|^2$ leads to a total variance $\| x \|^2 \tr(\Sigma)$ with $\Sigma$ the covariance of $p(y | x)$. These results are proven below. We can see that both expressions scale as $n^2$, assuming that $\|x\|^2$ and the trace terms scale as $n$. These assumptions are fulfilled if, for example, the data is Gaussian, with covariance that doesn't depend on $n$. Since the number of parameters (elements of $A$) scales as $n^2$, the variance per parameter is constant. We thus expect similar scaling behavior to flow matching, with no problems due to variance in high dimensions.

\begin{lemma}
Let $A \in \mathbb{R}^{n \times n}$ be a matrix with entries $A_{ij}$, and let $B \in \mathbb{R}^{n \times n}$ be any matrix. Define $C^{ij} = \dfrac{\partial A}{\partial A_{ij}} B$ for each $i, j$. Let $v \in \mathbb{R}^n$ be a random vector with entries independently drawn from the standard normal distribution. Then, the total variance of the Hutchinson estimators $v^T C^{ij} v$ for $\tr(C^{ij})$ over all $i, j$ is given by
\begin{equation}
    \operatorname{Total\ Variance} = \sum_{i=1}^n \sum_{j=1}^n \operatorname{Var}\left( v^T C^{ij} v \right) = 2n \|B\|_F^2 = 2n \tr(BB^T),
\end{equation}
where $\|\cdot\|_F$ denotes the Frobenius norm.
\end{lemma}

\begin{proof}
We know from \citet{hutchinson1989stochastic} that 
\begin{equation}
    \operatorname{Var}\left( v^T C^{ij} v \right) = 2 \|C^{ij}\|_F^2
\end{equation}
Note the form of $\frac{\partial A}{\partial A_{ij}}$, using the Kronecker delta:
\begin{equation}
\left(\frac{\partial A}{\partial A_{ij}}\right)_{kl} = \delta_{ki} \delta_{lj}.
\end{equation}
This implies
\begin{equation}
(C^{ij})_{kl} = \sum_m \left(\frac{\partial A_{km}}{\partial A_{ij}}\right) B_{ml} = \delta_{ki} \delta_{mj} B_{ml} = \delta_{ki} B_{jl}.
\end{equation}
Thus, $C^{ij}$ has non-zero entries only in the $i$-th row, and that row is equal to the $j$-th row of $B$:
\begin{equation}
(C^{ij})_{kl} = \begin{cases} B_{jl} & \text{if } k = i, \\ 0 & \text{otherwise}. \end{cases}
\end{equation}

Next, calculate the Frobenius norm of $C^{ij}$:
\begin{equation}
\|C^{ij}\|_F^2 = \sum_{k, l} (C^{ij})_{kl}^2 = \sum_l (B_{jl})^2 = \|B_{j:}\|_2^2,
\end{equation}
where $B_{j:}$ denotes the $j$-th row of $B$.

Finally, sum over all $i$ and $j$ to find the total variance:
\begin{equation}
\operatorname{Total\ Variance} = \sum_{i=1}^n \sum_{j=1}^n 2\|B_{j:}\|_2^2 = 2n \sum_{j=1}^n \|B_{j:}\|_2^2.
\end{equation}
Since
\begin{equation}
\sum_{j=1}^n \|B_{j:}\|_2^2 = \|B\|_F^2,
\end{equation}
the total variance simplifies to
\begin{equation}
\operatorname{Total\ Variance} = 2n \|B\|_F^2.
\end{equation}
Noting that $\|B\|_F^2 = \tr(BB^T)$ completes the proof.
\end{proof}

\begin{lemma}
Let $A \in \mathbb{R}^{n \times n}$ be a fixed matrix, $x \in \mathbb{R}^n$ a fixed vector, and let $y \in \mathbb{R}^n$ be a random vector with conditional distribution $p(y|x)$ having covariance matrix $\Sigma$. Define the loss function
\[
L(A) = \dfrac{1}{2} \|Ax - y\|^2.
\]
Then, the total variance of the gradient estimators $\dfrac{\partial L}{\partial A_{ij}}$ over all $i$ and $j$ under $p(y|x)$ is given by
\[
\operatorname{Total\ Variance} = \|x\|^2 \tr(\Sigma).
\]
\end{lemma}

\begin{proof}

The loss function is given by
\begin{equation}
L(A) = \dfrac{1}{2} \|Ax - y\|^2 = \dfrac{1}{2} (Ax - y)^T (Ax - y).
\end{equation}

The derivative of $L(A)$ with respect to $A_{ij}$ is
\begin{equation}
\dfrac{\partial L}{\partial A_{ij}} = \left( Ax - y \right)^T \dfrac{\partial (Ax)}{\partial A_{ij}}.
\end{equation}

Since $Ax$ is a vector whose $k$-th component is $(Ax)_k = \sum_{l=1}^n A_{kl} x_l$, the derivative of $(Ax)_k$ with respect to $A_{ij}$ is
\begin{equation}
\dfrac{\partial (Ax)_k}{\partial A_{ij}} = \delta_{ki} x_j,
\end{equation}
where $\delta_{ki}$ is the Kronecker delta.

Therefore, the derivative becomes
\begin{equation}
\dfrac{\partial L}{\partial A_{ij}} = \left( Ax - y \right)_i x_j.
\end{equation}

We are interested in the variance of the estimator $\dfrac{\partial L}{\partial A_{ij}}$ under $p(y|x)$. Since $A$ and $x$ are fixed, the only randomness comes from $y$. Assuming that $y$ is a random vector with mean $\mu = \E[y|x]$ and covariance matrix $\Sigma$, we have
\begin{equation}
\operatorname{Var}\left( \dfrac{\partial L}{\partial A_{ij}} \right) = \operatorname{Var}\left( \left( Ax - y \right)_i x_j \right) = x_j^2 \operatorname{Var}\left( \left( Ax - y \right)_i \right).
\end{equation}

Since $(Ax - y)_i = (Ax)_i - y_i$, and $(Ax)_i$ is deterministic, it follows that
\begin{equation}
\operatorname{Var}\left( \left( Ax - y \right)_i \right) = \operatorname{Var}( y_i ) = \Sigma_{ii}.
\end{equation}

Therefore,
\begin{equation}
\operatorname{Var}\left( \dfrac{\partial L}{\partial A_{ij}} \right) = x_j^2 \Sigma_{ii}.
\end{equation}

The total variance over all $i$ and $j$ is
\begin{equation}
\operatorname{Total\ Variance} = \sum_{i=1}^n \sum_{j=1}^n \operatorname{Var}\left( \dfrac{\partial L}{\partial A_{ij}} \right) = \sum_{i=1}^n \sum_{j=1}^n x_j^2 \Sigma_{ii} = \|x\|^2 \tr(\Sigma).
\end{equation}

Therefore, we have proven that the total variance is $\|x\|^2 \tr(\Sigma)$.
\end{proof}

\section{Experimental details}
\label{app:experiments}

In accordance with the details provided in \cref{sec:loss-and-regularization}, our approach incorporates multiple regularization loss components in addition to the negative log-likelihood objective. This results in the final loss expression:
\begin{align}
    \loss 
    &= \loss_\text{NLL} + \beta_\text{R}^{x/z} \loss_\text{R}^{x/z} + \beta_\text{U}^{x/z} \loss_\text{U}^{x/z} + \beta_\text{P}^{x/z} \loss_\text{P}^{x/z}.
\end{align}
For each of the terms, there is a variant in $x$- and in $z$-space, as indicated by the superscript. In detail:

The first loss $\loss_\text{R}$ represents the reconstruction loss:
\begin{align}
    \loss_\text{R}^x &= \E_{x \sim p_\text{data}}[d(x, g_\phi(f_\theta(x))], \\
    \loss_\text{R}^z &= \E_{x \sim p_\text{data}}[d(f_\theta(x), f_\theta(g_\phi(f_\theta(x)))].
\end{align}
Here $d(x, y) = \|x - y\|^2$ is the standard reconstruction loss in the embedding space. This could be replaced with a distance on the manifold. However, this would be more expensive to compute and since we initialize networks close to identity, the distance in the embedding space is almost equal to shortest path length on the manifold.

In order to have accurate reconstructions outside of training data, we also add reconstruction losses for data uniformly sampled on the manifold, both for $x$ and $z$:
\begin{align}
    \loss_\text{U}^x &= \E_{x \sim \Uu(\M)}[d(x, g_\phi(f_\theta(x))], \\
    \loss_\text{U}^z &= \E_{x \sim \Uu(\M)}[d(z, f_\theta(g_\phi(z))].
\end{align}

Finally, we make sure that the function learned by the neural networks is easy to project by regularizing the distance between the raw outputs by the neural networks in the embedding space and the subsequent projection to the manifold (compare \cref{eq:manifold-neural-network}):
\begin{align}
    \loss_\text{P}^x &= \E_{x \sim p_\text{data}}[\| g_\phi(f_\theta(x)) - \tilde g_\phi(f_\theta(x)) \|^2], \\
    \loss_\text{P}^z &= \E_{x \sim p_\text{data}}[\| f_\theta(x) - \tilde f_\theta(x) \|^2].
\end{align}

If these superscripts are not specified explicitly in the following summary of experimental details, we mean $\beta^x = \beta^z$.

In all cases, we selected hyperparameter using the performance on the validation data.

\subsection{Likelihood Evaluation}
\label{app:Likelihood}

Sampling from a trained model can be easily achieved by sampling from the latent distribution and performing a single pass through the decoder $g$. However, in order to evaluate the likelihood of the test set under our model, as in \citep{draxler2024freeform}, we need to calculate the change of variable formula w.r.t. $g^{-1}$
\begin{align}
        \log p_X(x) = \log p_Z(g^{-1}(x)) + \log |R^T {g^{-1}}'(x) Q| + \tfrac{1}{2} \log \frac{|R^T G_Z(g^{-1}(x)) R|}{|Q^T G_X(x) Q|} \\
        \approx \log p_Z(f(x)) + \log |R^T g'(f(x))^{-1} Q| + \tfrac{1}{2} \log \frac{|R^T G_Z(f(x)) R|}{|Q^T G_X(x) Q|}.
\end{align}
Here we used the approximation $f \approx g^{-1}$. While is expensive to compute for training, it is reasonably fast to compute during inference time. To show the validity of this approximation, we report the reconstruction losses computed on the test dataset for all experiments in table \ref{tab:rec}.

\textbf{Insufficient reconstruction losses.} If the assumption of $f \approx g^{-1}$ is not sufficiently fulfilled the measured likelihoods might be inaccurate. We try to identify such cases by sampling points from $\M$ around the proposed latent point with small noise strength $\sigma$ 
\begin{align}
    \Tilde{z} = \proj(f(x) + \N(0, \sigma)).
\end{align} 
As inverse of the decoder $g$ we use the sample $\Tilde{z}$ which results in the lowest reconstruction loss
\begin{align}
    g^{-1}(x) \approx \underset{\Tilde{z}}{\argmin} || g(\Tilde{z}) - x ||_2^2.
\end{align}
We also do this with samples drawn around $\Tilde{x} = \proj(x + \N(0, \sigma))$ and uniformly from the manifold $\Tilde{z} = \Uu(\M)$. We note that in most cases except for the earth datasets the likelihoods computed in this way agree with $f \approx g^{-1}$. In case of the earth datasets, we note that the newly computed likelihoods now agree with observed model quality. Specifically, whenever reconstruction loss is low we also see agreement between the likelihoods computed via $f$ and our approximation of $g^{-1}$ via sampling. Otherwise, the disagreement between $f$ and $g^{-1}$ and the resulting drop in model quality are correctly diagnosed. Therefore, for the all $\S^2$ experiments we report the likelihoods computed by our approximation.

\textbf{Sampling based evaluation.} In order to evaluate our models independently of their reconstruction capabilities, we propose to also report a sample-based metric. Due to the curse of dimensionality, sample-based metrics are only tractable in low dimensions. Therefore, we report the Wasserstein-2 distance between model samples and the test dataset, including the standard deviation over multiple training runs, for all 2-dimensional manifolds in \cref{tab:wasserstein}. As competing method have neither reported a sampling based evaluation metric nor published their models, we propose to use our baseline results for future benchmarking.

\begin{table}
    \caption{Wasserstein-2 distances for all 2-dimensional manifolds.}
    \label{tab:wasserstein}
    \centering
    \begin{tabular}{cc}
        \toprule
        Volcano &0.249 \scalebox{0.6}{ $\pm$ 0.5} \\ 
        Earthquakes & 0.068 \scalebox{0.6}{ $\pm$ 0.022} \\ 
        Flood & 0.047 \scalebox{0.6}{ $\pm$ 0.010} \\
        Fire & 0.072 \scalebox{0.6}{ $\pm$ 0.027} \\
        \midrule
        General & 0.21 \scalebox{0.6}{ $\pm$ 0.04} \\
        Glycine & 0.32 \scalebox{0.6}{ $\pm$ 0.05} \\
        Proline & 0.51 \scalebox{0.6}{ $\pm$ 0.05} \\
        Pre-Pro & 0.47 \scalebox{0.6}{ $\pm$ 0.04} \\
        \midrule
        Bunny ($k=10$)& 0.09 \scalebox{0.6}{ $\pm$ 0.02} \\
        Bunny ($k=50$)& 0.046 \scalebox{0.6}{ $\pm$ 0.006} \\
        Bunny ($k=100$)& 0.026 \scalebox{0.6}{ $\pm$ 0.007} \\
        \bottomrule
    \end{tabular}
\end{table}

\begin{table}
    \caption{The reconstruction losses $\mathcal L_R$ show that the reconstructed points lie close to the original data. This justifies evaluating M-FFF via negative log-likelihoods.}
    \label{tab:rec}
    \centering
    \begin{tabular}{cc}
        \toprule
        SO(3) ($K=16$) & $(2.7 \pm 0.3) \times 10^{-5} $ \\ %
        SO(3) ($K=32$) & $(1.6 \pm 1.8) \times 10^{-4} $ \\ %
        SO(3) ($K=64$) & $(7.2 \pm 1.6) \times 10^{-5} $ \\ %
        \midrule
        Volcano & ($ 5.1 \pm 0.8) \times 10^{-6}$ \\  %
        Earthquakes & ($1.9 \pm 0.8) \times 10^{-7}$ \\ %
        Flood & ($1.9 \pm 0.6) \times 10^{-5}$ \\ %
        Fire & ($1.7 \pm 0.4) \times 10^{-6}$ \\ %
        \midrule
        General & $(7.2 \pm 0.9) \times 10 ^{-6}$ \\
        Glycine & $(1.6 \pm 0.2) \times 10 ^{-5}$ \\ %
        Proline & $(1.3 \pm 0.3) \times 10 ^{-5}$ \\
        Pre-Pro & $(1.5 \pm 0.8) \times 10 ^{-4}$ \\
        RNA & $(8.9 \pm 0.6) \times 10 ^{-4}$ \\
        \midrule
        Bunny ($k=10$)& $ (2.6 \pm 0.5) \times 10^{-5} $ \\
        Bunny ($k=50$)& $ (2.2 \pm 0.5) \times 10^{-5} $ \\
        Bunny ($k=100$)& $ (2.3 \pm 0.5) \times 10^{-5} $ \\
        \bottomrule
    \end{tabular}
\end{table}

\subsection{Special orthogonal group}
\label{app:rotations}

\begin{table}
    \centering
    \begin{tabular}{lc}
        Hyperparameter & Value \\
        \hline
        Layer type & ResNet \\
        Residual blocks & 2 \\
        Inner depth & 5 \\
        Inner width & $512$ \\
        Activation & $\operatorname{ReLU}$ \\
        \hline
        $\beta_\text{R}^x$ & $500$ \\
        $\beta_\text{R}^z$ & $0$ \\
        $\beta_\text{U}$ & $10$ \\
        $\beta_\text{P}$ & $10$ \\
        Latent distribution & uniform \\
        \hline
        Optimizer & Adam \\
        Learning rate & $5\times10^{-3}$ \\
        Scheduler & Exponential w/ $\gamma = 1 - 10^{-5}$ \\
        Gradient clipping & 1.0 \\
        Weight decay & $3\times10^{-5}$ \\
        \hline
        Batch size & 1,024 \\
        Step count & 585,600 \\
        \#Repetitions & 3 \\
    \end{tabular}
    \caption{Hyperparameter choices for the rotation experiments. $\beta_\text{U}$ and $\beta_\text{P}$ are the same for both the sample and latent space.}
    \label{tab:rotation-training-details}
\end{table}

To apply manifold free-form flows, we project an output matrix $R \in \R^{3 \times 3}$ from the encoder/decoder to the subspace of special orthogonal matrices by finding the matrix $Q \in SO(3)$ minimizing the Frobenius norm $\|Q - R\|_F$. This is known as the constrained Procrustes problem and the solution $Q$ can be determined via the SVD $R = U \Sigma V^T$ \cite{lawrence2019purely}:
\begin{equation}
    \label{eq:procrustes-solution}
    Q = U S V^T,
\end{equation}
where the diagonal entries of $\Sigma$ were sorted from largest to smallest and $S = \Diag(1, \dots, 1, \det(UV^T))$.

The special orthogonal group data set is synthetically generated. We refer to \cite{de2022riemannian} for a description of the data generation process. They use an infinite stream of samples. To emulate this, we generate a data set of $10^7$ samples from their code, of which we reserve 1,000 for validation during training and 5,000 for testing. We vectorize the $3 \times 3$ matrices before passing them into the fully-connected networks. All training details are given in \cref{tab:rotation-training-details}, one training run takes approximately 7 hours on a NVIDIA A40.

The data set is synthetically generated; of the $N=100,000$ data points, we use 1\% of for validation and hyperparameter selection and 5\% for test NLL. Each run uses a different random initialization of papers.

\subsection{Earth data}
\label{app:earth}

\begin{table}
    \centering
    \begin{tabular}{lcc}
        Dataset & Number of instances & Noise strength \\
        \hline
        Volcano & 827 & 0.008 \\
        Earthquake & 6120 & 0.0015 \\
        Flood & 4875 & 0.0015 \\
        Fire & 12809 & 0.0015 \\
    \end{tabular}
    \caption{Dataset overview for the earth data experiments. Each dataset is split into 80\% for training, 10\% for validation and 10\% for testing.}
    \label{tab:earth-dataset-details}
\end{table}

\begin{table}
    \centering
    \begin{tabular}{lc}
        Hyperparameter & Value \\
        \hline
        Layer type & ResNet \\
        residual blocks & 4 \\
        Inner depth & 2 \\
        Inner width & 256 \\
        Activation & $\sin$ \\
        \hline
        $\beta^x_\text{R}$ & $10^5$ \\
        $\beta^z_\text{R}$ & 0 \\
        $\beta_\text{U}$ & $2\times10^2$ \\
        $\beta_\text{P}$ & 0 \\
        Latent distribution & VMF-Mixture ($n_\text{comp}=5$) \\
        \hline
        Optimizer & Adam \\
        Learning rate & $2\times10^{-4}$ \\
        Scheduler & onecyclelr \\
        Gradient clipping & 10.0 \\
        Weight decay & $5\times10^{-5}$ \\
        \hline
        Batch size & 32 \\
        Step count & $\sim$ 1.2M \\
        \#Repetitions & 5
    \end{tabular}
    \caption{Hyperparameter choices for the earth data experiments. $\beta_\text{U}$ and $\beta_\text{P}$ are the same for both the sample and latent space.}
    \label{tab:earth-training-details}
\end{table}

\begin{figure}
    \centering
    \includegraphics[width=0.24\linewidth]{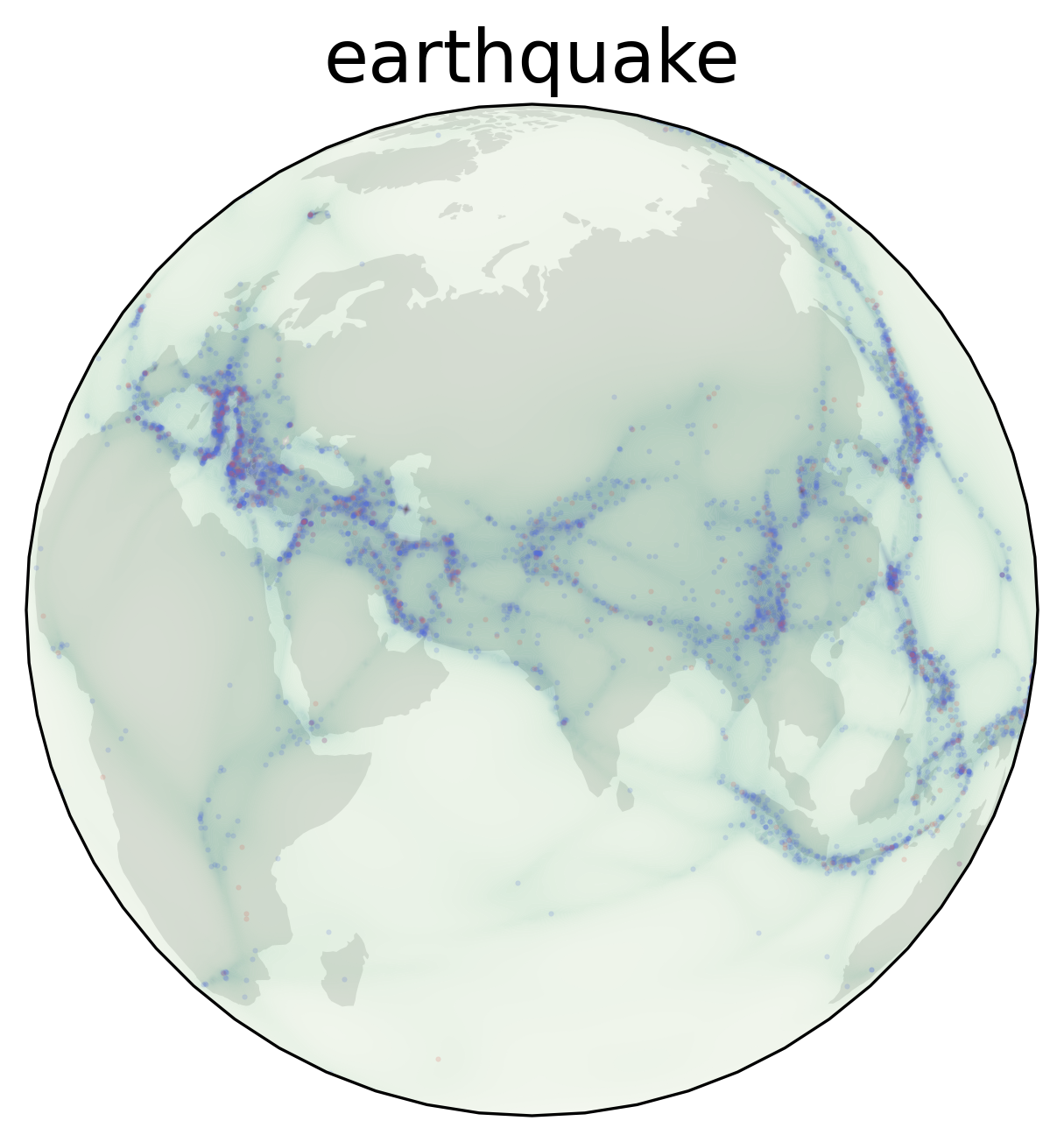}
    \includegraphics[width=0.24\linewidth]{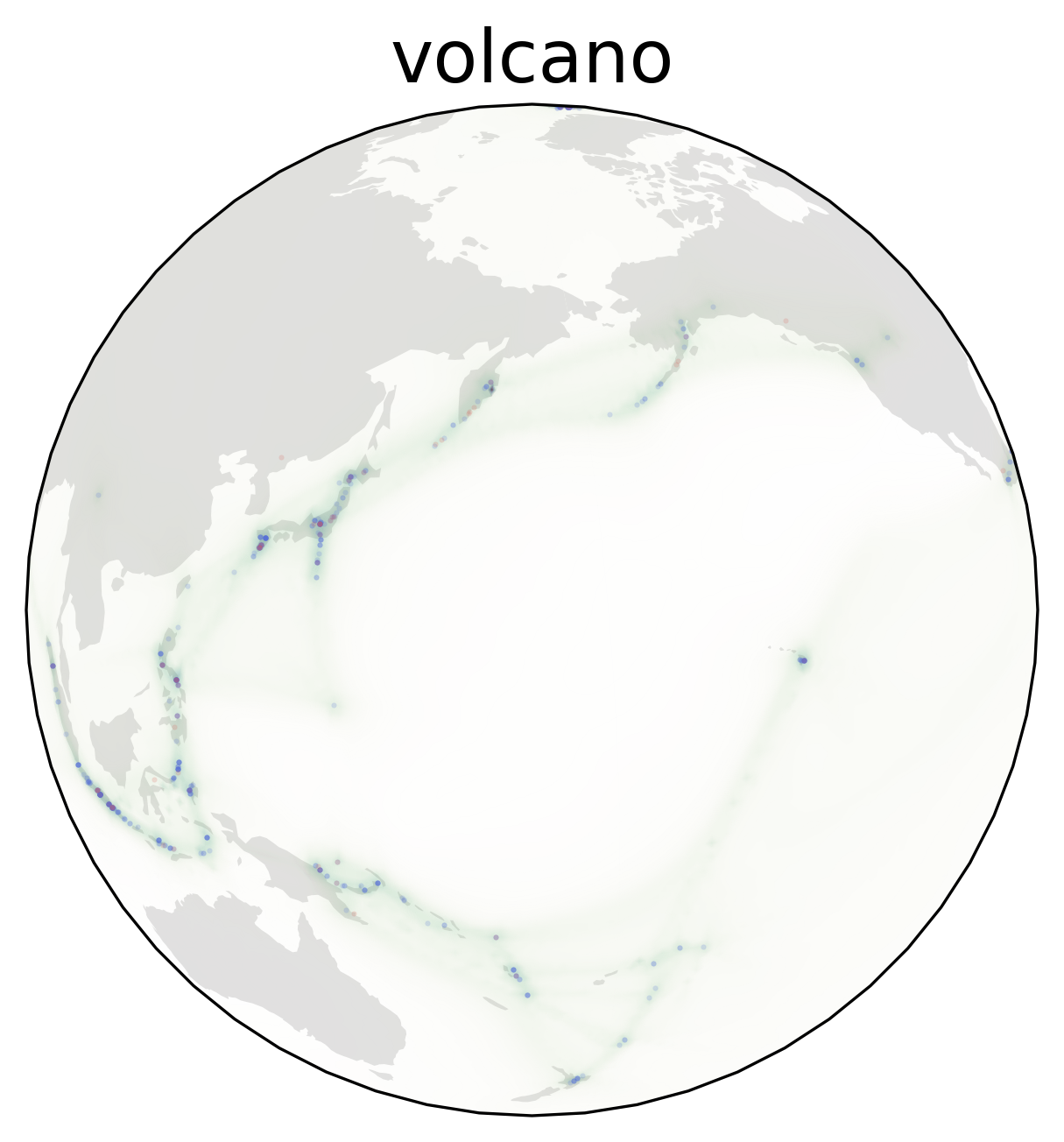}
    \includegraphics[width=0.24\linewidth]{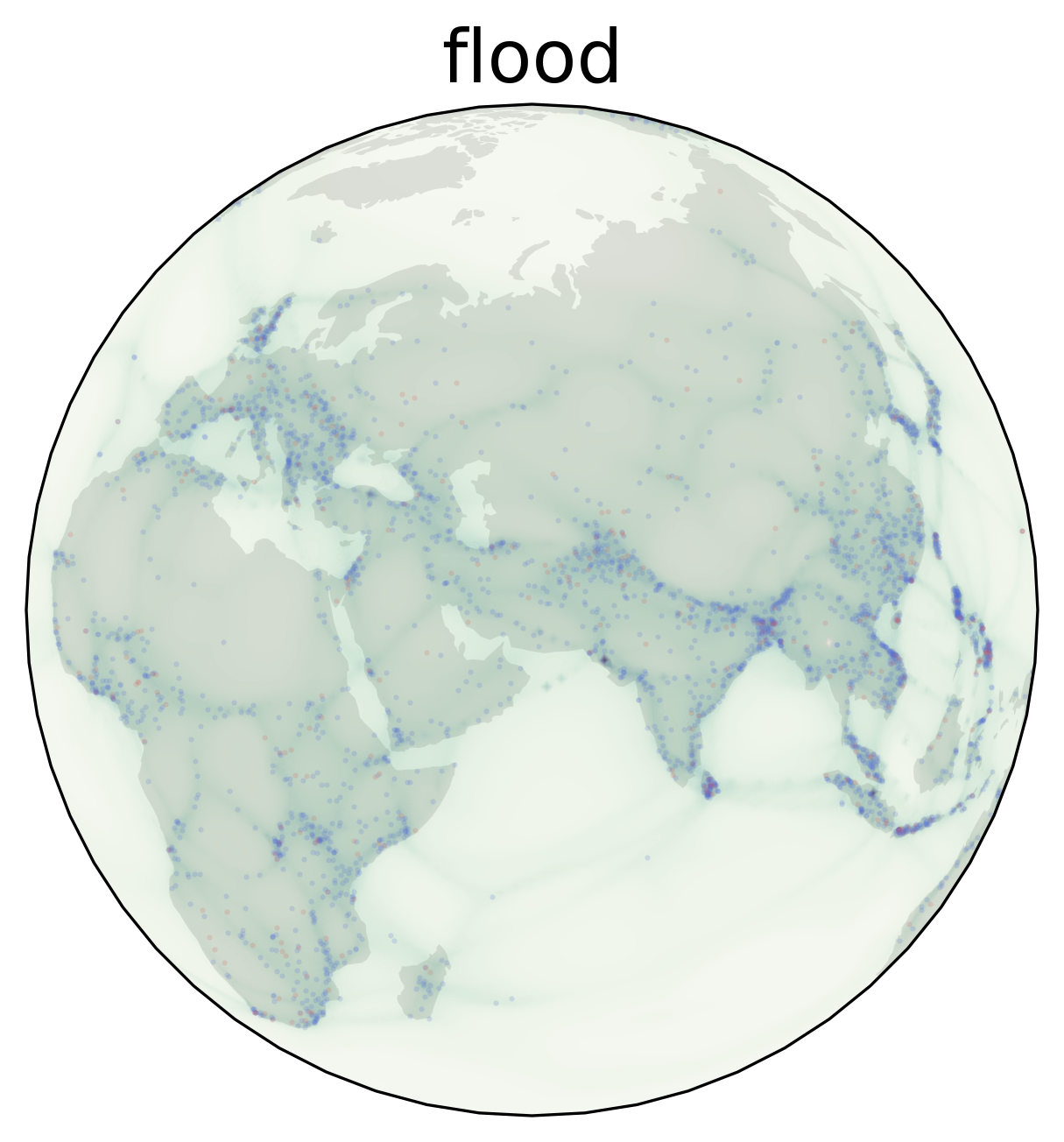}
    \includegraphics[width=0.24\linewidth]{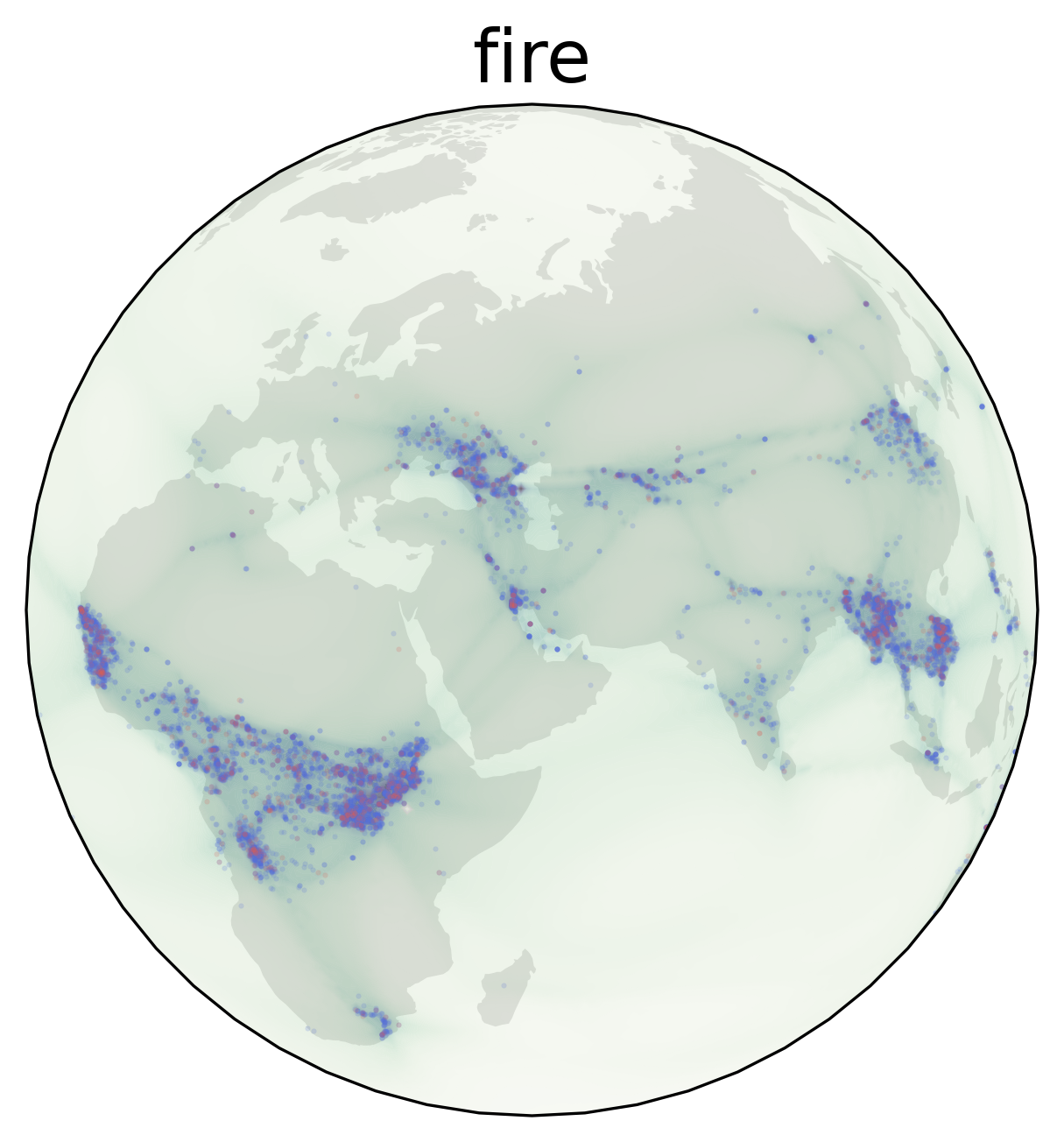}
    \caption{Density estimates of our model on the earth datasets. Blue points show the training dataset, red points the test dataset.}
    \label{fig:earth}
\end{figure}

We follow previous works and use a dataset split of  80\% for training, 10\% for validation and 10\% for testing. For the earth datasets we use a mixture of 5 learnable Von-Mises-Fisher distributions for the target latent distribution. We base our implementation on the \texttt{hyperspherical\_vae} library \cite{davidson2018hyperspherical}. In order to stabilize training we apply a small amount of Gaussian noise to every batch (see table \ref{tab:earth-dataset-details}) and project the resulting data point back onto the sphere. Other training hyperparameters can be found in table \ref{tab:earth-training-details}. Each model trained around 20h on a compute cluster using a single NVIDIA A40.

\subsection{Tori}
\label{app:tori}

\begin{table}
    \centering
    \begin{tabular}{lcc}
        Dataset & Number of instances & Noise strength \\
        \hline
        General & $138208$ & $0$\\
        Glycine & $13283$ & $0$ \\
        Proline & $7634$ & $0$ \\
        Pre-Proline & $6910$ & $0$ \\
        RNA & $9478$ & $1\times10^{-2}$ \\
    \end{tabular}
    \caption{Details on the torus datasets. Each dataset is randomly split into a train dataset ($80\%$), validation dataset ($10\%$) and test dataset ($10\%$). During training, we add Gaussian noise with mean zero and standard deviation given by `noise strength' to the data, to counteract overfitting.}
    \label{tab:tori-dataset-details}
\end{table}

\begin{table}
    \centering
    \begin{tabular}{lcc}
        Hyperparameter & Value ($\TT^2$) & Value ($\TT^7$)\\
        \hline
        Layer type & ResNet & ResNet \\
        residual blocks & 6 & 2 \\
        Inner depth & 3 & 2 \\
        Inner width & $256$ & $256$ \\
        Activation & SiLU & SiLU \\
        \hline
        $\beta_\text{R}^x$ & $100$ & $1000$ \\
        $\beta_\text{R}^z$ & $100$ & $100$ \\
        $\beta_\text{U}^x$ & $100$ & $100$ \\
        $\beta_\text{U}^z$ & $0$ & $1000$ \\
        $\beta_\text{P}$ & 0 & 0 \\
        Latent distribution & uniform & uniform\\
        \hline
        Optimizer & Adam & Adam \\
        Learning rate & $1\times10^{-3}$ & $1\times10^{-3}$ \\
        Scheduler & oneclyclelr & onecyclelr \\
        Gradient clipping & - & - \\
        Weight decay & $1\times10^{-3}$ & $1\times10^{-3}$ \\
        \hline
        Batch size & 512 & 512 \\
        Step count & $\sim$ 120k & $\sim$ 120k \\
        \#Repetitions & 5
    \end{tabular}
    \caption{Details on the model architecture, loss weights and optimizer parameters for the torus datasets. We use the same configuration for all protein datasets on $\TT^2$.}
    \label{tab:tori-hyperparameter-details}
\end{table}

\begin{figure}
    \centering
    \includegraphics[width=0.24\linewidth]{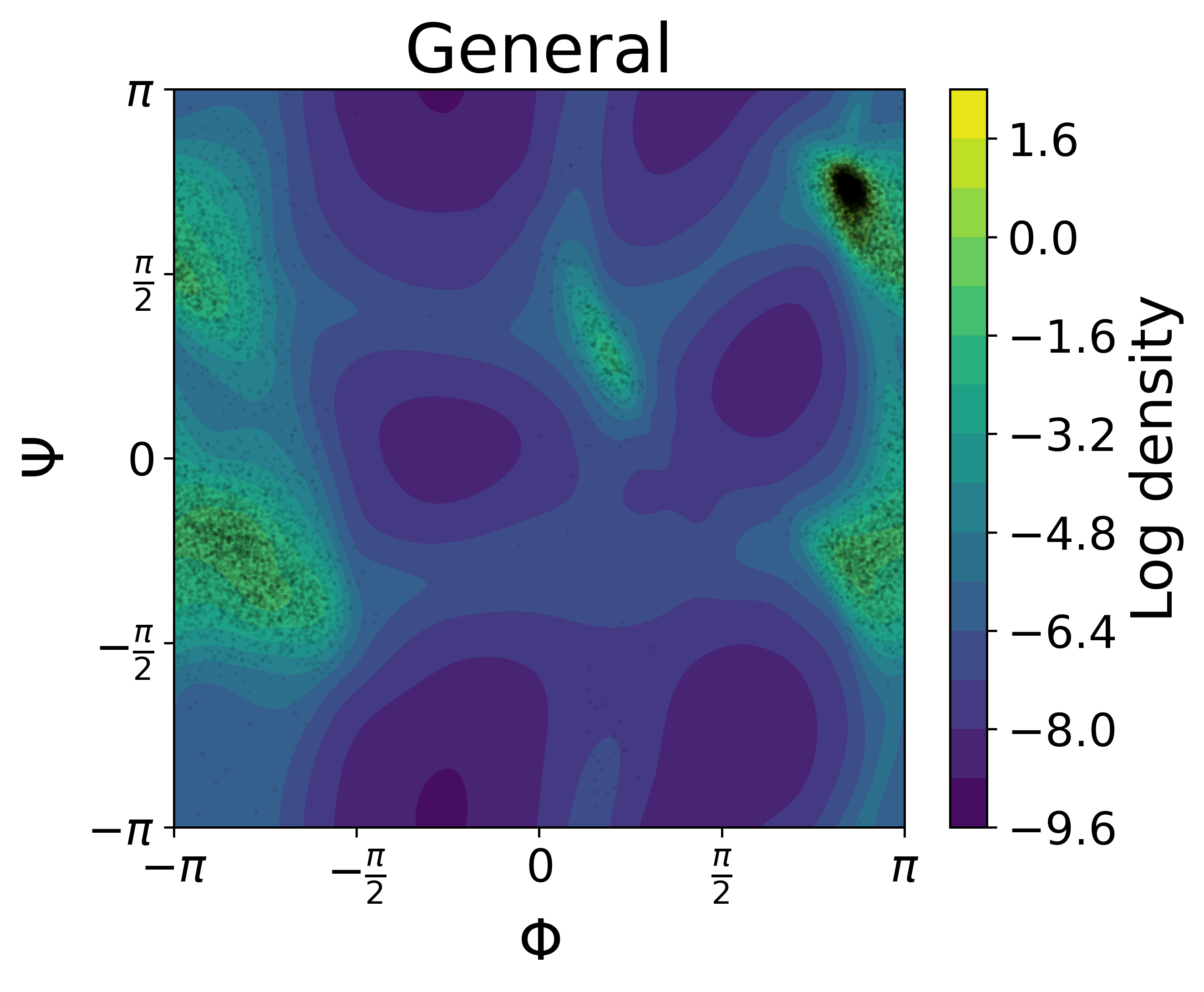}
    \includegraphics[width=0.24\linewidth]{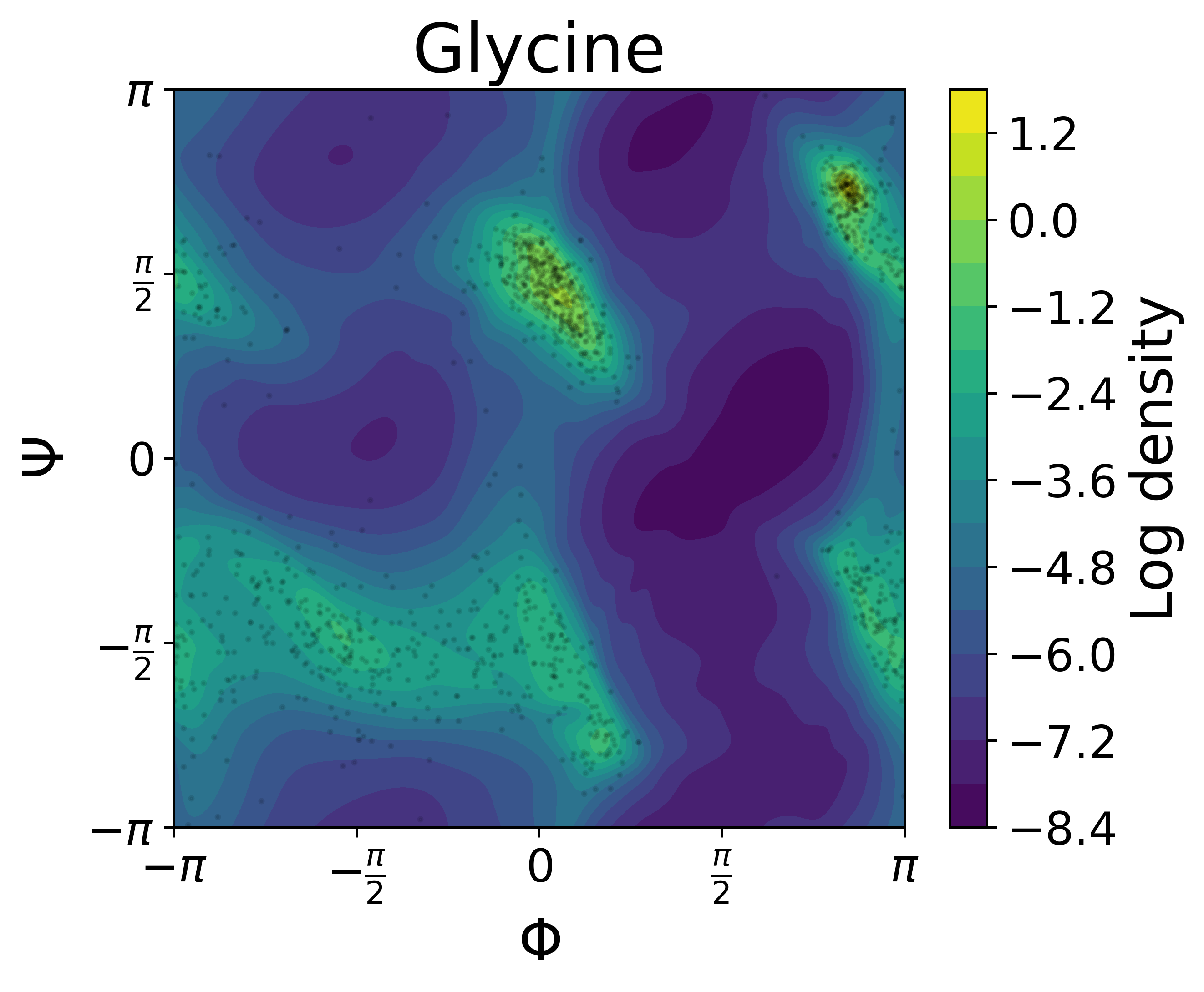}
    \includegraphics[width=0.24\linewidth]{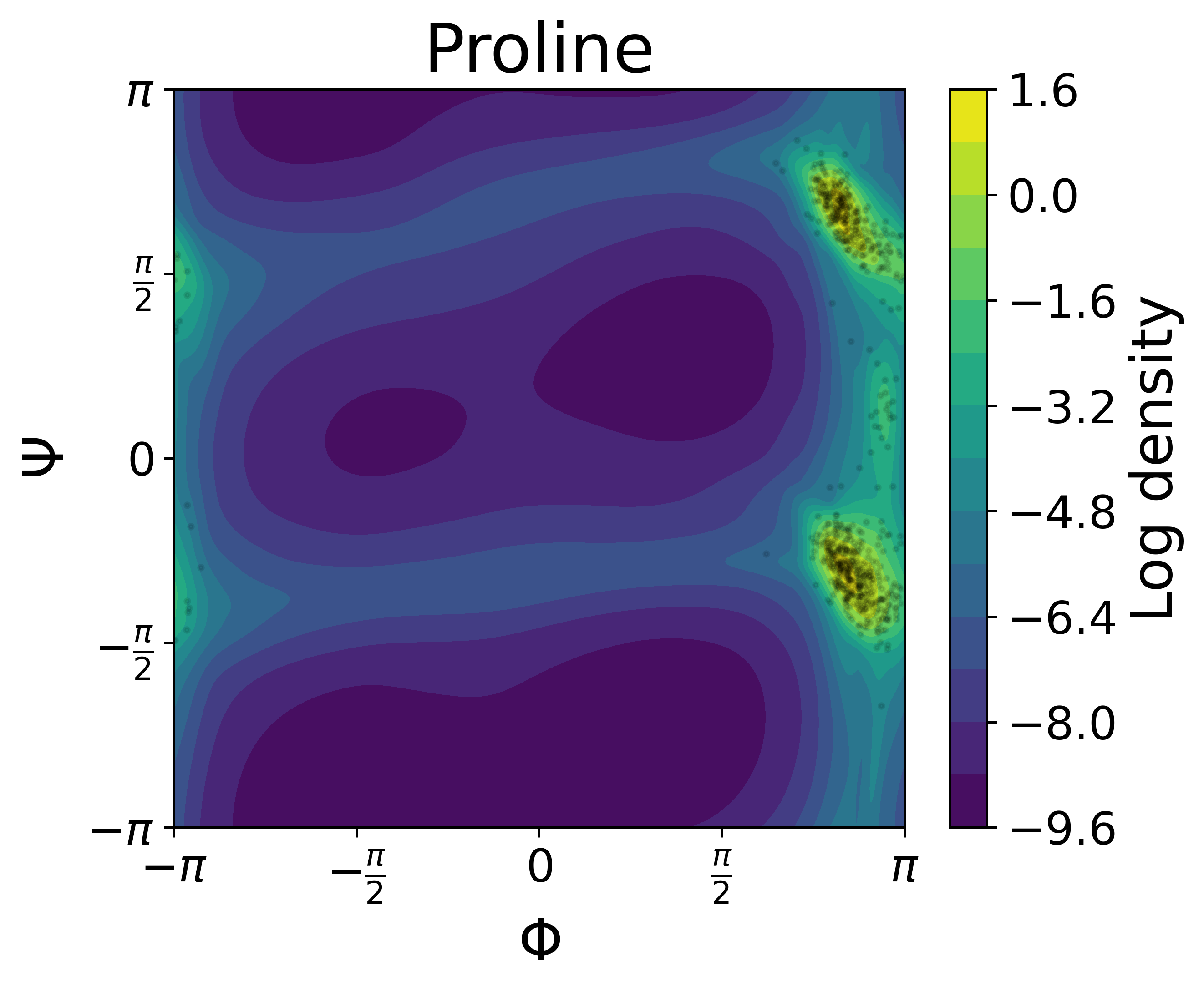}
    \includegraphics[width=0.24\linewidth]{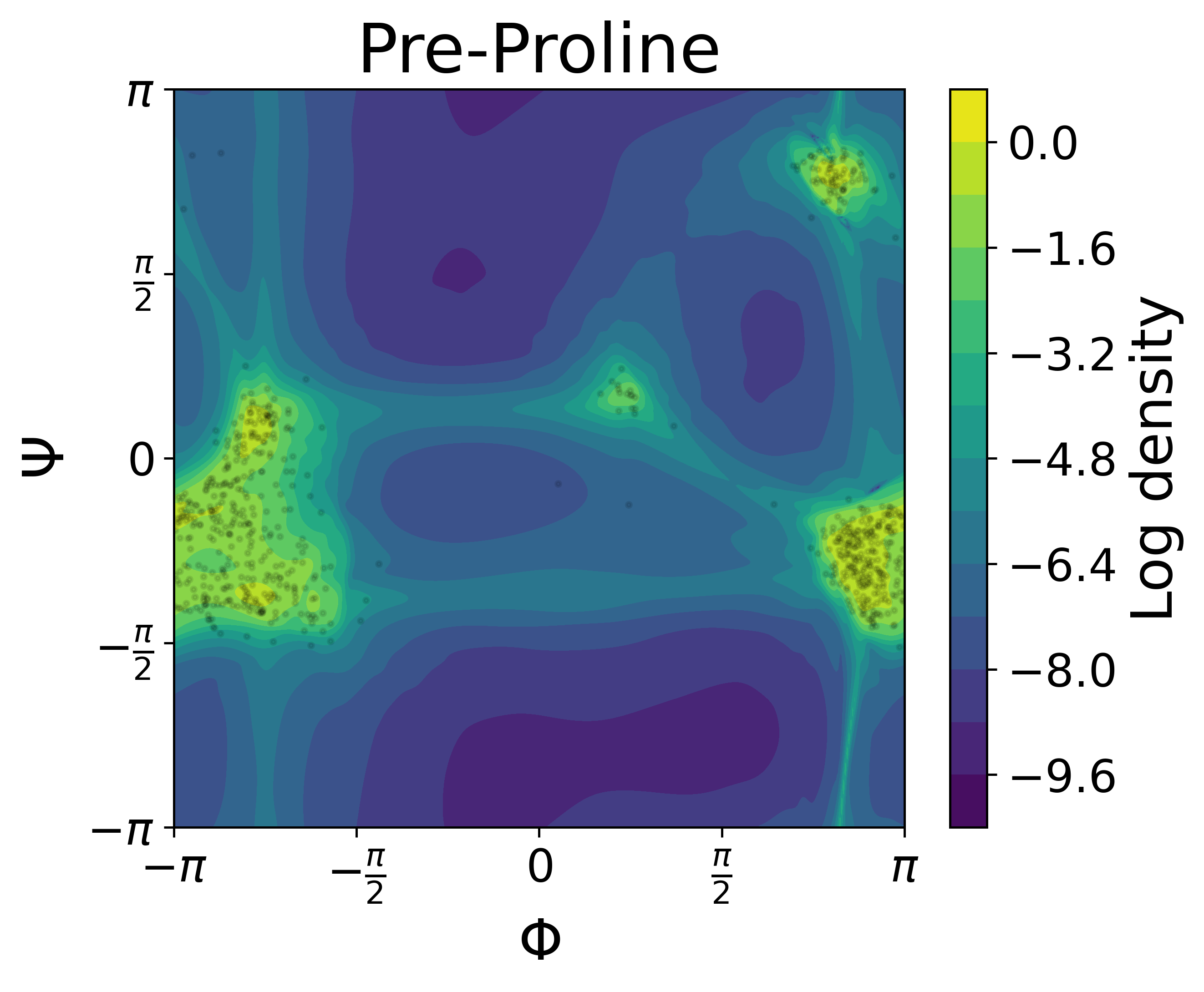}
    \caption{Log density of M-FFF models in the $(\Phi, \Psi)$-plane of protein backbone dihedral angles (known as Ramachandran plot\cite{ramachandran1963stereochemistry}).  The learned density matches the true density indicated by the test dataset (\textit{black dots}) very well. Note also that the learned distribution obeys the periodic boundary conditions.}
    \label{fig:ramachandran}
\end{figure}

The torus datasets are randomly split into a train dataset (80\%), validation dataset (10\%) and test dataset (10\%). To counteract overfitting, we augment the RNA dataset with random Gaussian noise. The noise strength and total number of instances is reported in \cref{tab:tori-dataset-details}. We use a uniform latent distribution. We train for ~120k steps with a batch size of 512 which takes 2.5 to 3 hours on a NVIDIA GeForce RTX 2070 graphics card. Further hyperparameters used in training can be found in \cref{tab:tori-hyperparameter-details}.

\subsection{Hyperbolic space} \label{app:hyperbolic}
A straightforward way to define distributions on hyperbolic space (but also other Riemannian manifolds) is, to define a probability density $p_{\text{tangent}}$ in the tangent space at the origin and use the exponential map $\exp_0$ to pushforward this distribution onto the manifold using \cref{eq:manifold-cov}:
\begin{equation}
    \log p_\text{manifold} (\exp_0(v)) = \log p_\text{tangent}(v) - \log |J_{\exp_0}(v)| - \frac{1}{2} \log \frac{|G_\text{manifold}(\exp_0(v))|}{|G_\text{tangent}(v)|},
    \label{eq:pushforward-wrapped-distribution}
\end{equation}
where $G_\text{manifold}$ denotes the metric tensor of the embedded manifold and $G_\text{tangent}$ the metric tensor of the tangent space. This is also known as a 'wrapped' distribution.

We use the Poincaré ball model, which embeds the n-dimensional hyperbolic space $\HH^n$ in the n-dimensional Euclidean space $\RR^n$ as defined in \cref{tab:manifolds}. The exponential map at the origin of this embedding and its Jacobian determinant are simply given by:
\begin{align}
    \exp_0(v) = \tanh(\|v\|) \frac{v}{\|v\|} && \text{and} && |J_{\exp_0}(v)| = \frac{\tanh(\|v\|)}{\|v\| \cosh ^2 (\|v\|)}.
\end{align}
The metric tensor at some point $p \in \HH$ is defined by: $G_\HH^{ij}(p) = \lambda_p ^2 \delta_{ij}$ with $\lambda_p = 2 / (1 - \|x\|^2)$. The metric tensor of the tangent space is the usual Euclidean metric tensor: $G_{\R}^{ij} = \delta_{ij}$.

With this at hand, we can define latent and toy distributions as wrapped distributions at the origin, as depicted in \cref{fig:hyperbolic-toy-distributions}. 

For training, we sample 100k data points from each distribution. Hyperparameters for each model can be found in \cref{tab:poincare-ball-hyperparameter-details}.  Training takes approximately 2.5, 16, 8 and 16 hours on a NVIDIA A40 graphics card for the one Gaussian, five Gaussians,  swish and checkerboard dataset respectively.  The resulting model densities are shown in \cref{fig:hyperbolic-toy-distributions}.

\begin{figure}
    \centering
    \includegraphics[width=\linewidth]{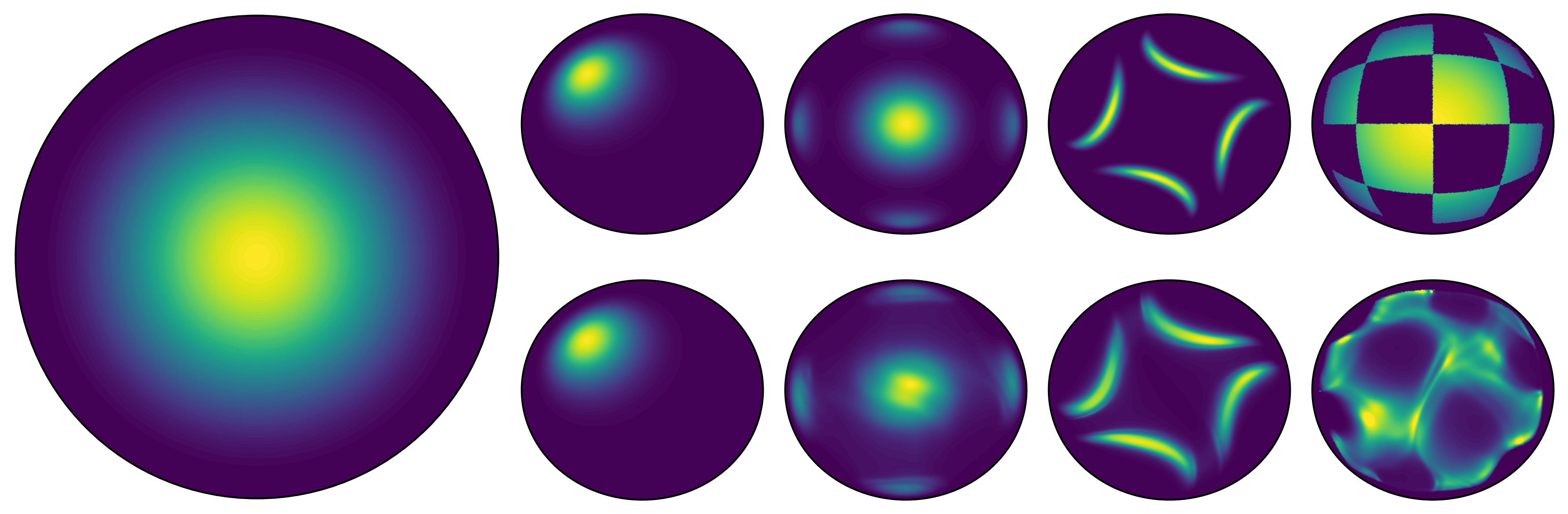}
    \caption{Density estimation on the Poincaré ball model. As latent distribution we use a wrapped normal distribution with standard deviation 0.5 (\textit{Left}). As target distributions (\textit{top row}) we define several toy distributions in the tangent space at the origin and use \cref{eq:pushforward-wrapped-distribution} to push forward to the manifold. We will reference each distribution from left to right as 'one Gaussian', 'five Gaussians', 'swish' and 'checkerboard'. We train M-FFF on these target distributions using the full expression in \cref{eq:manifold-cov} to compute the change in variables and evaluate the densities of the models (\textit{bottom row}). M-FFF are capable to adapt to non-isometrically embedded manifolds. The learned densities on the one Gaussian, five Gaussians and swish dataset closely follow the target densities. On the checkerboard dataset, M-FFF cannot fully reproduce the sharp edges and density of the dataset.}
    \label{fig:hyperbolic-toy-distributions}
\end{figure}

\begin{table}
    \centering
    \resizebox{\linewidth}{!}{
        \begin{tabular}{lcccc}
            Hyperparameter & Value (one wrapped) & Value (five gaussians) & Value (swish) & Value (checkerboard)\\
            \hline
            Layer type & ResNet & ResNet & ResNet & ResNet\\
            residual blocks & 2 & 6 & 6 & 6\\
            Inner depth & 2 & 3 & 3 & 3\\
            Inner width & $128$ & $256$ & $256$ & $256$\\
            Activation & SiLU & SiLU & SiLU & SiLU\\
            \hline
            $\beta_\text{R}^x$ & $1000$ & $1000$ & $1000$ & $1000$\\
            $\beta_\text{R}^z$ & $100$ & $100$ & $100$ & $100$\\
            $\beta_\text{U}^x$ & $100$ & $100$ & $100$ & $100$\\
            $\beta_\text{U}^z$ & $0$ & $0$ & $0$ & 0\\
            $\beta_\text{P}$ & 1 & 1 & 1 & 1\\
            Latent distribution & Wrapped normal & Wrapped normal & Wrapped normal & Wrapped Normal\\
            \hline
            Optimizer & Adam & Adam & Adam & Adam\\
            Learning rate & $2\times10^{-4}$ & $1\times10^{-4}$ & $1\times10^{-4}$ & $1\times10^{-4}$ \\
            Scheduler & Exponential w/ $\gamma=0.9986$ & onecyclelr & onecyclelr & onecyclelr \\
            Gradient clipping & - & - & - & -\\
            Weight decay & $1\times10^{-3}$ & $1\times10^{-3}$ & $1\times10^{-3}$ & $1\times10^{-3}$ \\
            \hline
            Batch size & 4096 & 4096 & 4096 & 4096\\
            Step count & $\sim$ 84k & $\sim$ 485k & $\sim$ 240k & $\sim$ 495k \\
        \end{tabular}
    }
    \caption{Details on the model architecture, loss weights and optimizer parameters for the Poincaré ball experiments.}
    \label{tab:poincare-ball-hyperparameter-details}
\end{table}

\subsection{3D mesh}

\begin{table}
    \centering
    \begin{tabular}{lc}
        Hyperparameter & Value \\
        \hline
        Layer type & ResNet \\
        Residual blocks & 2 \\
        Inner depth & 5 \\
        Inner width & $512$ \\
        Activation & $\operatorname{ReLU}$ \\
        \hline
        $\beta_\text{R}^x$ & $1000$ \\
        $\beta_\text{R}^z$ & $0$ \\
        $\beta_\text{U}$ & $10$ \\
        $\beta_\text{P}^x$ & $100$ \\
        $\beta_\text{P}^z$ & $10$ \\
        Latent distribution & uniform \\
        \hline
        Optimizer & Adam \\
        Learning rate & $5\times10^{-4}$ \\
        Scheduler & Exponential w/ $\gamma = 1 - 0.0039$ \\
        Gradient clipping & 1.0 \\
        Weight decay & $3\times10^{-5}$ \\
        \hline
        Batch size & 1,024 \\
        Step count & 469.199 \\
        \#Repetitions & 3
    \end{tabular}
    \caption{Hyperparameter choices for the bunny experiments.}
    \label{tab:bunny-training-details}
\end{table}

We base our experiment on the manifold and data provided by \cite{chen2024flow} using 80\% for training, 10\% for validation and hyperparameter tuning. We report test NLL on the remaining 10\% of the data. Each run starts from different parameter initialization. They give the manifold as a triangular mesh, consisting of vertices $v_i \in \R^3, i = 1, \dots N_v$ and triangular faces $f_j \in \{1, \dots, N_v \}^3$.

Since the projection to the nearest point on the mesh has zero gradient in parts of $\R^3$, we instead project to the manifold using a separately trained auto encoder with a spherical latent space. This autoencoder consists of an encoder $e: \R^3 \to \R^3$ consisting of five hidden layers with 256 neurons each, SiLU activations and an overall skip connection. The latent codes are computed by projecting the encoder outputs $e(x)$ to a sphere as $z(x) = e(x) / \| e(x) \|$, so that the latent space has the same the topology as the input mesh. Then, a decoder $d(z)$ with the same structure as the encoder is trained to reconstruct the original points by minimizing $\|x - d(e(x) / \| e(x) \| \|^2$. We train it for $2^{18} = 262,144$ steps, with each batch consisting of all $N_v = 2,502$ vertices and an additional $N = 2,502$ uniformly random points on the original mesh. We find that for successful training, it is helpful to filter out data with $x_2 > 0.5 + n / 10,000$, where $n$ is the step number. This prevents the long bunny ears from collapsing as the are slowly grown, allowing the model to adapt.

We then train M-FFF with the hyperparameters given in table \cref{tab:bunny-training-details}, using the pretrained autoencoder as our projection to the manifold. Note that we do not train the distribution on the latent sphere of the encoder, but directly on the manifold spanned by it. Training takes approximately 14 hours on a NVIDIA A40.

\subsection{Libraries}
\label{app:libraries}

We base our code on \texttt{PyTorch} \cite{paszke2019pytorch}, PyTorch Lightning \citep{falcon2019pytorch}, Lightning Trainable \cite{kuhmichel2023lightning}, Numpy \citep{harris2020array}, Matplotlib \citep{hunter2007matplotlib} for plotting and Pandas \citep{mckinney2010data,reback2020pandas} for data evaluation. We use the \texttt{geomstats} \cite{miolane2020geomstats, miolane2023geomstats} package for embeddings and projections.

\newpage
\section*{NeurIPS Paper Checklist}

\begin{enumerate}

\item {\bf Claims}
    \item[] Question: Do the main claims made in the abstract and introduction accurately reflect the paper's contributions and scope?
    \item[] Answer: \answerYes{} %
    \item[] Justification: Manifold Free-Form Flows are introduced in \cref{sec:m-fff}, and the experimental performance is demonstrated in \cref{sec:experiments}.
    \item[] Guidelines:
    \begin{itemize}
        \item The answer NA means that the abstract and introduction do not include the claims made in the paper.
        \item The abstract and/or introduction should clearly state the claims made, including the contributions made in the paper and important assumptions and limitations. A No or NA answer to this question will not be perceived well by the reviewers. 
        \item The claims made should match theoretical and experimental results, and reflect how much the results can be expected to generalize to other settings. 
        \item It is fine to include aspirational goals as motivation as long as it is clear that these goals are not attained by the paper. 
    \end{itemize}

\item {\bf Limitations}
    \item[] Question: Does the paper discuss the limitations of the work performed by the authors?
    \item[] Answer: \answerYes{} %
    \item[] Justification: We provide a dedicated limitations section in \cref{sec:limitations}.
    \item[] Guidelines:
    \begin{itemize}
        \item The answer NA means that the paper has no limitation while the answer No means that the paper has limitations, but those are not discussed in the paper. 
        \item The authors are encouraged to create a separate "Limitations" section in their paper.
        \item The paper should point out any strong assumptions and how robust the results are to violations of these assumptions (e.g., independence assumptions, noiseless settings, model well-specification, asymptotic approximations only holding locally). The authors should reflect on how these assumptions might be violated in practice and what the implications would be.
        \item The authors should reflect on the scope of the claims made, e.g., if the approach was only tested on a few datasets or with a few runs. In general, empirical results often depend on implicit assumptions, which should be articulated.
        \item The authors should reflect on the factors that influence the performance of the approach. For example, a facial recognition algorithm may perform poorly when image resolution is low or images are taken in low lighting. Or a speech-to-text system might not be used reliably to provide closed captions for online lectures because it fails to handle technical jargon.
        \item The authors should discuss the computational efficiency of the proposed algorithms and how they scale with dataset size.
        \item If applicable, the authors should discuss possible limitations of their approach to address problems of privacy and fairness.
        \item While the authors might fear that complete honesty about limitations might be used by reviewers as grounds for rejection, a worse outcome might be that reviewers discover limitations that aren't acknowledged in the paper. The authors should use their best judgment and recognize that individual actions in favor of transparency play an important role in developing norms that preserve the integrity of the community. Reviewers will be specifically instructed to not penalize honesty concerning limitations.
    \end{itemize}

\item {\bf Theory Assumptions and Proofs}
    \item[] Question: For each theoretical result, does the paper provide the full set of assumptions and a complete (and correct) proof?
    \item[] Answer: \answerYes{} %
    \item[] Justification: The proof for \cref{thm:embedded-manifold-cov,thm:loss-function-trace} are given as \cref{thm:embedded-manifold-cov-appendix,thm:loss-function-trace-appendix} in the appendix. The necessary assumptions are listed with each theorem and we explicitly state in the main text whenever they are relaxed in practice. We intuition on the proofs upon statement in the main text.
    \item[] Guidelines:
    \begin{itemize}
        \item The answer NA means that the paper does not include theoretical results. 
        \item All the theorems, formulas, and proofs in the paper should be numbered and cross-referenced.
        \item All assumptions should be clearly stated or referenced in the statement of any theorems.
        \item The proofs can either appear in the main paper or the supplemental material, but if they appear in the supplemental material, the authors are encouraged to provide a short proof sketch to provide intuition. 
        \item Inversely, any informal proof provided in the core of the paper should be complemented by formal proofs provided in appendix or supplemental material.
        \item Theorems and Lemmas that the proof relies upon should be properly referenced. 
    \end{itemize}

    \item {\bf Experimental Result Reproducibility}
    \item[] Question: Does the paper fully disclose all the information needed to reproduce the main experimental results of the paper to the extent that it affects the main claims and/or conclusions of the paper (regardless of whether the code and data are provided or not)?
    \item[] Answer: \answerYes{} %
    \item[] Justification: The gradient estimator in \cref{thm:loss-function-trace} can be directly implemented using the text below, and we give detailed instructions in \cref{app:experiments} for each experiment, and released all necessary code.
    \item[] Guidelines:
    \begin{itemize}
        \item The answer NA means that the paper does not include experiments.
        \item If the paper includes experiments, a No answer to this question will not be perceived well by the reviewers: Making the paper reproducible is important, regardless of whether the code and data are provided or not.
        \item If the contribution is a dataset and/or model, the authors should describe the steps taken to make their results reproducible or verifiable. 
        \item Depending on the contribution, reproducibility can be accomplished in various ways. For example, if the contribution is a novel architecture, describing the architecture fully might suffice, or if the contribution is a specific model and empirical evaluation, it may be necessary to either make it possible for others to replicate the model with the same dataset, or provide access to the model. In general. releasing code and data is often one good way to accomplish this, but reproducibility can also be provided via detailed instructions for how to replicate the results, access to a hosted model (e.g., in the case of a large language model), releasing of a model checkpoint, or other means that are appropriate to the research performed.
        \item While NeurIPS does not require releasing code, the conference does require all submissions to provide some reasonable avenue for reproducibility, which may depend on the nature of the contribution. For example
        \begin{enumerate}
            \item If the contribution is primarily a new algorithm, the paper should make it clear how to reproduce that algorithm.
            \item If the contribution is primarily a new model architecture, the paper should describe the architecture clearly and fully.
            \item If the contribution is a new model (e.g., a large language model), then there should either be a way to access this model for reproducing the results or a way to reproduce the model (e.g., with an open-source dataset or instructions for how to construct the dataset).
            \item We recognize that reproducibility may be tricky in some cases, in which case authors are welcome to describe the particular way they provide for reproducibility. In the case of closed-source models, it may be that access to the model is limited in some way (e.g., to registered users), but it should be possible for other researchers to have some path to reproducing or verifying the results.
        \end{enumerate}
    \end{itemize}

\item {\bf Open access to data and code}
    \item[] Question: Does the paper provide open access to the data and code, with sufficient instructions to faithfully reproduce the main experimental results, as described in supplemental material?
    \item[] Answer: \answerYes{} %
    \item[] Justification: We provide code to all experiments in \cref{app:experiments}.
    \item[] Guidelines:
    \begin{itemize}
        \item The answer NA means that paper does not include experiments requiring code.
        \item Please see the NeurIPS code and data submission guidelines (\url{https://nips.cc/public/guides/CodeSubmissionPolicy}) for more details.
        \item While we encourage the release of code and data, we understand that this might not be possible, so “No” is an acceptable answer. Papers cannot be rejected simply for not including code, unless this is central to the contribution (e.g., for a new open-source benchmark).
        \item The instructions should contain the exact command and environment needed to run to reproduce the results. See the NeurIPS code and data submission guidelines (\url{https://nips.cc/public/guides/CodeSubmissionPolicy}) for more details.
        \item The authors should provide instructions on data access and preparation, including how to access the raw data, preprocessed data, intermediate data, and generated data, etc.
        \item The authors should provide scripts to reproduce all experimental results for the new proposed method and baselines. If only a subset of experiments are reproducible, they should state which ones are omitted from the script and why.
        \item At submission time, to preserve anonymity, the authors should release anonymized versions (if applicable).
        \item Providing as much information as possible in supplemental material (appended to the paper) is recommended, but including URLs to data and code is permitted.
    \end{itemize}

\item {\bf Experimental Setting/Details}
    \item[] Question: Does the paper specify all the training and test details (e.g., data splits, hyperparameters, how they were chosen, type of optimizer, etc.) necessary to understand the results?
    \item[] Answer: \answerYes{} %
    \item[] Justification: See \cref{sec:experiments,app:experiments} for these details.
    \item[] Guidelines:
    \begin{itemize}
        \item The answer NA means that the paper does not include experiments.
        \item The experimental setting should be presented in the core of the paper to a level of detail that is necessary to appreciate the results and make sense of them.
        \item The full details can be provided either with the code, in appendix, or as supplemental material.
    \end{itemize}

\item {\bf Experiment Statistical Significance}
    \item[] Question: Does the paper report error bars suitably and correctly defined or other appropriate information about the statistical significance of the experiments?
    \item[] Answer: \answerYes{} %
    \item[] Justification: We calculate the mean and standard deviation across runs for different data splits and initialization using \texttt{pandas.std} in \cref{tab:so3-results,tab:earth,tab:tori-results,tab:bunny}. A description of how errors are evaluated is included in \cref{sec:experiments} and details are included in \cref{app:experiments}.
    \item[] Guidelines:
    \begin{itemize}
        \item The answer NA means that the paper does not include experiments.
        \item The authors should answer "Yes" if the results are accompanied by error bars, confidence intervals, or statistical significance tests, at least for the experiments that support the main claims of the paper.
        \item The factors of variability that the error bars are capturing should be clearly stated (for example, train/test split, initialization, random drawing of some parameter, or overall run with given experimental conditions).
        \item The method for calculating the error bars should be explained (closed form formula, call to a library function, bootstrap, etc.)
        \item The assumptions made should be given (e.g., Normally distributed errors).
        \item It should be clear whether the error bar is the standard deviation or the standard error of the mean.
        \item It is OK to report 1-sigma error bars, but one should state it. The authors should preferably report a 2-sigma error bar than state that they have a 96\% CI, if the hypothesis of Normality of errors is not verified.
        \item For asymmetric distributions, the authors should be careful not to show in tables or figures symmetric error bars that would yield results that are out of range (e.g. negative error rates).
        \item If error bars are reported in tables or plots, The authors should explain in the text how they were calculated and reference the corresponding figures or tables in the text.
    \end{itemize}

\item {\bf Experiments Compute Resources}
    \item[] Question: For each experiment, does the paper provide sufficient information on the computer resources (type of compute workers, memory, time of execution) needed to reproduce the experiments?
    \item[] Answer: \answerYes{} %
    \item[] Justification: Yes, we provide the runtime of each experiment with the training details in \cref{app:experiments}.
    \item[] Guidelines:
    \begin{itemize}
        \item The answer NA means that the paper does not include experiments.
        \item The paper should indicate the type of compute workers CPU or GPU, internal cluster, or cloud provider, including relevant memory and storage.
        \item The paper should provide the amount of compute required for each of the individual experimental runs as well as estimate the total compute. 
        \item The paper should disclose whether the full research project required more compute than the experiments reported in the paper (e.g., preliminary or failed experiments that didn't make it into the paper). 
    \end{itemize}
    
\item {\bf Code Of Ethics}
    \item[] Question: Does the research conducted in the paper conform, in every respect, with the NeurIPS Code of Ethics \url{https://neurips.cc/public/EthicsGuidelines}?
    \item[] Answer: \answerYes{} %
    \item[] Justification: We adhere to the Code of Ethics.
    \item[] Guidelines:
    \begin{itemize}
        \item The answer NA means that the authors have not reviewed the NeurIPS Code of Ethics.
        \item If the authors answer No, they should explain the special circumstances that require a deviation from the Code of Ethics.
        \item The authors should make sure to preserve anonymity (e.g., if there is a special consideration due to laws or regulations in their jurisdiction).
    \end{itemize}

\item {\bf Broader Impacts}
    \item[] Question: Does the paper discuss both potential positive societal impacts and negative societal impacts of the work performed?
    \item[] Answer: \answerNA{} %
    \item[] Justification: The paper improves generative models on manifolds on a methodological level. While generative models in general can be used for malicious purposes, we do not see a direct positive or negative impact for society for the case of generating data on manifolds.
    \item[] Guidelines:
    \begin{itemize}
        \item The answer NA means that there is no societal impact of the work performed.
        \item If the authors answer NA or No, they should explain why their work has no societal impact or why the paper does not address societal impact.
        \item Examples of negative societal impacts include potential malicious or unintended uses (e.g., disinformation, generating fake profiles, surveillance), fairness considerations (e.g., deployment of technologies that could make decisions that unfairly impact specific groups), privacy considerations, and security considerations.
        \item The conference expects that many papers will be foundational research and not tied to particular applications, let alone deployments. However, if there is a direct path to any negative applications, the authors should point it out. For example, it is legitimate to point out that an improvement in the quality of generative models could be used to generate deepfakes for disinformation. On the other hand, it is not needed to point out that a generic algorithm for optimizing neural networks could enable people to train models that generate Deepfakes faster.
        \item The authors should consider possible harms that could arise when the technology is being used as intended and functioning correctly, harms that could arise when the technology is being used as intended but gives incorrect results, and harms following from (intentional or unintentional) misuse of the technology.
        \item If there are negative societal impacts, the authors could also discuss possible mitigation strategies (e.g., gated release of models, providing defenses in addition to attacks, mechanisms for monitoring misuse, mechanisms to monitor how a system learns from feedback over time, improving the efficiency and accessibility of ML).
    \end{itemize}
    
\item {\bf Safeguards}
    \item[] Question: Does the paper describe safeguards that have been put in place for responsible release of data or models that have a high risk for misuse (e.g., pretrained language models, image generators, or scraped datasets)?
    \item[] Answer: \answerNA{} %
    \item[] Justification: The data considered are synthetic or scientific, and as such do not have a risk of misuse.
    \item[] Guidelines:
    \begin{itemize}
        \item The answer NA means that the paper poses no such risks.
        \item Released models that have a high risk for misuse or dual-use should be released with necessary safeguards to allow for controlled use of the model, for example by requiring that users adhere to usage guidelines or restrictions to access the model or implementing safety filters. 
        \item Datasets that have been scraped from the Internet could pose safety risks. The authors should describe how they avoided releasing unsafe images.
        \item We recognize that providing effective safeguards is challenging, and many papers do not require this, but we encourage authors to take this into account and make a best faith effort.
    \end{itemize}

\item {\bf Licenses for existing assets}
    \item[] Question: Are the creators or original owners of assets (e.g., code, data, models), used in the paper, properly credited and are the license and terms of use explicitly mentioned and properly respected?
    \item[] Answer: \answerYes{} %
    \item[] Justification: We cite the original datasets in \cref{app:experiments} and the underlying software in \cref{app:libraries}.
    \item[] Guidelines:
    \begin{itemize}
        \item The answer NA means that the paper does not use existing assets.
        \item The authors should cite the original paper that produced the code package or dataset.
        \item The authors should state which version of the asset is used and, if possible, include a URL.
        \item The name of the license (e.g., CC-BY 4.0) should be included for each asset.
        \item For scraped data from a particular source (e.g., website), the copyright and terms of service of that source should be provided.
        \item If assets are released, the license, copyright information, and terms of use in the package should be provided. For popular datasets, \url{paperswithcode.com/datasets} has curated licenses for some datasets. Their licensing guide can help determine the license of a dataset.
        \item For existing datasets that are re-packaged, both the original license and the license of the derived asset (if it has changed) should be provided.
        \item If this information is not available online, the authors are encouraged to reach out to the asset's creators.
    \end{itemize}

\item {\bf New Assets}
    \item[] Question: Are new assets introduced in the paper well documented and is the documentation provided alongside the assets?
    \item[] Answer: \answerNA{} %
    \item[] Justification: No assests are introduced.
    \item[] Guidelines:
    \begin{itemize}
        \item The answer NA means that the paper does not release new assets.
        \item Researchers should communicate the details of the dataset/code/model as part of their submissions via structured templates. This includes details about training, license, limitations, etc. 
        \item The paper should discuss whether and how consent was obtained from people whose asset is used.
        \item At submission time, remember to anonymize your assets (if applicable). You can either create an anonymized URL or include an anonymized zip file.
    \end{itemize}

\item {\bf Crowdsourcing and Research with Human Subjects}
    \item[] Question: For crowdsourcing experiments and research with human subjects, does the paper include the full text of instructions given to participants and screenshots, if applicable, as well as details about compensation (if any)? 
    \item[] Answer: \answerNA{} %
    \item[] Justification: No crowdsourcing nor research with human subjects.
    \item[] Guidelines:
    \begin{itemize}
        \item The answer NA means that the paper does not involve crowdsourcing nor research with human subjects.
        \item Including this information in the supplemental material is fine, but if the main contribution of the paper involves human subjects, then as much detail as possible should be included in the main paper. 
        \item According to the NeurIPS Code of Ethics, workers involved in data collection, curation, or other labor should be paid at least the minimum wage in the country of the data collector. 
    \end{itemize}

\item {\bf Institutional Review Board (IRB) Approvals or Equivalent for Research with Human Subjects}
    \item[] Question: Does the paper describe potential risks incurred by study participants, whether such risks were disclosed to the subjects, and whether Institutional Review Board (IRB) approvals (or an equivalent approval/review based on the requirements of your country or institution) were obtained?
    \item[] Answer: \answerNA{} %
    \item[] Justification: No crowdsourcing nor research with human subjects.
    \item[] Guidelines:
    \begin{itemize}
        \item The answer NA means that the paper does not involve crowdsourcing nor research with human subjects.
        \item Depending on the country in which research is conducted, IRB approval (or equivalent) may be required for any human subjects research. If you obtained IRB approval, you should clearly state this in the paper. 
        \item We recognize that the procedures for this may vary significantly between institutions and locations, and we expect authors to adhere to the NeurIPS Code of Ethics and the guidelines for their institution. 
        \item For initial submissions, do not include any information that would break anonymity (if applicable), such as the institution conducting the review.
    \end{itemize}

\end{enumerate}

\newpage

\end{document}